\titleformat*{\section}{\Large\bfseries}
\newcommand{\nocontentsline}[3]{}
\let\origcontentsline\addcontentsline
\newcommand\stoptoc{\let\addcontentsline\nocontentsline}
\newcommand\resumetoc{\let\addcontentsline\origcontentsline}
\newtheorem{defn}{Definition}
\newtheorem{lemma}[defn]{Lemma}
\newtheorem{proposition}[defn]{Proposition}
\newacronym{MMI}{MMI}{Maximum Mean Imprecision}
\newacronym{IIPM}{IIPM}{Integral Imprecise Probability Metric}
\newacronym{iipm}{IIPM}{integral imprecise probability metric}
\newacronym{IPML}{IPML}{Imprecise Probabilistic Machine Learning}
\newacronym{IPM}{IPM}{Integral Probability Metric}
\newacronym{EU}{EU}{epistemic uncertainty}
\newacronym{UQ}{UQ}{uncertainty quantification}
\newacronym{EUQ}{E-UQ}{epistemic uncertainty quantification}
\newcommand{\circint}{\mathop{\mathpalette\docircint\relax}\!\int}
\newcommand{\docircint}[2]{%
  \ifx#1\displaystyle
    \displaycircint
  \else
    \normalcircint{#1}%
  \fi
}
\newcommand{\displaycircint}{\displaystyle\mathsf{c}\mkern-18mu}
\newcommand{\normalcircint}[1]{%
  \smallerc{#1}\ifx#1\textstyle\mkern-9mu\else\mkern-8.2mu\fi
}
\newcommand{\smallerc}[1]{%
  \vcenter{\hbox{$\ifx#1\textstyle\scriptstyle\else\scriptscriptstyle\fi\mathsf{c}$}}%
}
\def\NN{{\mathbb N}}    
\def\RR{{\mathbb R}}    
\def\EE{{\mathbb E}}    
\def\11{{\mathbf 1}}    
   \def\cM{{\mathcal M}}    \def\cH{{\mathcal H}}   \def\cC{{\mathcal C}}        \def\cP{{\mathcal P}} \def\cV{{\mathcal V}}      \def\cF{{\mathcal F}}    \def\cX{{\mathcal X}} \def\cY{{\mathcal Y}}
\title{Integral Imprecise Probability Metrics}
\author{
    Siu Lun Chau\textsuperscript{\rm 1} \and 
    Michele Caprio\textsuperscript{\rm 2,3} \and 
    Krikamol Muandet\textsuperscript{\rm 4} \and     
}
\date{
\footnotesize{
    \textsuperscript{\rm 1}Epistemic Intelligence \& Computation Lab, College of Computing \& Data Science, Nanyang Technological University, Singapore\\
    \textsuperscript{\rm 2} Department of Computer Science, The University of Manchester, United Kingdom \\    
    \textsuperscript{\rm 3} Manchester Centre for AI Fundamentals, Manchester, United Kingdom \\    
    \textsuperscript{\rm 4}Rational Intelligence Lab, CISPA Helmholtz Center for Information Security, Germany\\
    [2ex]}
}
\begin{document}

\maketitle

\begin{abstract}
Quantifying differences between probability distributions is fundamental to statistics and machine learning, primarily for comparing statistical uncertainty. In contrast, epistemic uncertainty---due to incomplete knowledge---requires richer representations than those offered by classical probability. Imprecise probability (IP) theory offers such models, capturing ambiguity and partial belief. This has driven growing interest in imprecise probabilistic machine learning (IPML), where inference and decision-making rely on broader uncertainty models---highlighting the need for metrics beyond classical probability. This work introduces the \Gls{IIPM} framework, a Choquet integral-based generalisation of classical \Glspl{IPM} to the setting of capacities---a broad class of IP models encompassing many existing ones, including lower probabilities, probability intervals, belief functions, and more. Theoretically, we establish conditions under which \acrshort{IIPM} serves as a valid metric and metrises a form of weak convergence of capacities. Practically, \acrshort{IIPM} not only enables comparison across different IP models but also supports the quantification of epistemic uncertainty~(EU) within a single IP model. In particular, by comparing an IP model with its conjugate, IIPM gives rise to a new class of \acrshort{EU} measures---\glspl{MMI}---which satisfy key axiomatic properties proposed in the \acrlong{UQ} literature. We validate \acrshort{MMI} through selective classification experiments, demonstrating strong empirical performance against established EU measures, and outperforming them when classical methods struggle to scale to a large number of classes. Our work advances both theory and practice in \acrshort{IPML}, offering a principled framework for comparing and quantifying epistemic uncertainty under imprecision\footnote{Published at NeurIPS 2025; see the \href{https://proceedings.neurips.cc/paper_files/paper/2025/file/cbc4ab80cd77aa0eb87da062fbcddb46-Paper-Conference.pdf}{conference version}.}
\end{abstract}

Keywords: Probability metrics, Imprecise Probabilities, Uncertainty Quantification

\newpage 
\stoptoc
\section{Introduction}
\label{sec: introduction}


Inductive reasoning underpins artificial intelligence, sciences, and human judgment by enabling generalisation from observed evidence to unobserved phenomena. This requires navigating effectively the boundary between certain and uncertain information. Measure-theoretic probability theory, formalised by \citet{an1933sulla}, provides the mathematical foundation widely adopted to represent and manage such uncertainties. In machine learning (ML), its use is pervasive—from modelling data-generating processes to guiding predictions and decision-making. Probability theory primarily addresses uncertainties described as `risk', `first-order uncertainty', `statistical uncertainty', or `known unknowns'~\citep{rumsfeld2011known}, commonly referred to as \textit{aleatoric uncertainty} in ML~\citep{hora1996aleatory,hullermeier_aleatoric_2021}.


A growing focus in ML concerns another form of uncertainty arising from ignorance or lack of knowledge---termed `Knightian uncertainty'~\citep{knight1921risk}, `second-order uncertainty', or `unknown unknowns', commonly known as \emph{\acrfull{EU}}. While probability theory adequately models aleatoric uncertainty, scholars~\citep{keynes1921treatise, lewis1980subjectivist, walley1991statistical} argue that classical \emph{precise} probability, relying on a single probability measure, falls short in representing \acrshort{EU}. It struggles to formally model missing, uncertain, or qualitative data~\citep[Section 1.3]{cuzzolin2020geometry} and fails to distinguish between indifference~(equal belief) and genuine ignorance~(absence of knowledge)~\citep[Section 1.1.4]{walley1991statistical}; see also Hájek's~\citep{hajek_interpretations_2019} discussion on Laplace's and Bertrand's paradox~\citep{bertrand1889calcul}. At a more fundamental level, when probability is used to encode subjective belief or confidence, the additivity axiom of Kolmogorov implies that high uncertainty~(low confidence) about an event necessarily entails high certainty~(high confidence) about its complement. Yet under limited information, this may not hold---we might reasonably remain ambiguous about both.
These limitations contribute to practical issues in ML, including overconfidence~\citep{huang2025survey}, poor generalisation~\citep{muandet_domain_2013, singh2024domain}, and inadequate handling of partial ignorance~\citep{chau2025credal}. See \citet{manchingal2025position} for a recent discussion on the need to equip machine learning systems with epistemic uncertainty awareness.

Uncertainty-centric ML paradigms have emerged to address the limitations of classical approaches by explicitly modelling EU. Examples include Bayesian ML~\citep{fortuin2025rethinking} and evidential learning~\citep{sensoy2018evidential}
---each grounded in distinct philosophies with respective strengths and limitations. A growing alternative, known as \acrfull{IPML}, focuses on inference and decision-making using generalisations of classical probability, commonly known as \textit{imprecise probabilities~(IPs)}~\citep{walley1991statistical,troffaes2014lower,augustin_introduction_2014}. Examples of IP models include interval probabilities~\citep{keynes1921treatise}, random sets~\citep{kingman1975g}, fuzzy measures~\citep{sugeno1974theory}, belief functions~\citep{shafer1976mathematical}, higher-order probabilities~\citep{baron1987second,gaifman1986theory}, possibility theory~\citep{dubois1985theorie}, credal sets~\citep{levi1980enterprise,pmlr-v215-caprio23b}, and lower/upper probabilities~\citep{walley1991statistical,novel_bayes}. Many of these fall under the broader class of non-additive set functions called \emph{Choquet capacities}~\citep{choquet1953}; see Definition~\ref{def: capacities} and \Cref{fig: skeleton} in Appendix~\ref{appendix: IPML} for their generality. IPML provides principled tools to handle imprecision arising from model misspecification and data uncertainty, with growing applications in classification~\citep{denoeux2000neural,zaffalon2002exact,caprio2024conformalized}, hypothesis testing~\citep{chau2024credal,jurgens2025calibration}, scoring rules~\citep{frohlich2024scoring,singh2025truthful}, conformal prediction~\citep{stutz2023conformal,caprio2025conformal}, computer vision~\citep{cuzzolin1999evidential,giunchiglia2023road}, probabilistic programming~\citep{jack2025}, explainability~\citep{chau2023explaining,utkin2025imprecise, mohammadi2025exact,mohammadi2025exactgp}, neural networks~\citep{denoeux2000neural,caprio2023credal,wang2024credal}, learning theory~\citep{caprio_credal_2024}, causal inference~\citep{cozman2000credal,zaffalon2023approximating}, active and continual learning~\citep{inn,ibcl}, fixed point theory~\citep{caprio2025credalfixed}, and many more.


Motivated by these advances, this work contributes to the core development of IPML by introducing a first-of-its-kind family of statistical distances for imprecise probabilities, termed \emph{\acrfull{IIPM}}. IIPMs strictly generalise the classical \acrfull{IPM} framework~\citep{muller1997integral} by leveraging Choquet integration~\citep{choquet1953} to compare capacities, thereby recovering IPMs as a special case. With appropriate choices of function classes, IIPMs extend well-known metrics---such as the Dudley metric and total variation distance---to the setting of IPs. Theoretically, we establish conditions under which IIPMs serve as valid metrics and metrise the weak convergence of capacities. As an illustration, we show how IIPMs recover key theoretical results in a recent optimal transport under $\epsilon$-contamination~\citep{caprio2024optimal} problem. Beyond these theoretical contributions, IIPMs naturally induce a new class of \acrshort{EU} measures by quantifying the discrepancy between a capacity and its conjugate (see Section~\ref{subsec: 2.1}). This yields the \emph{\acrfull{MMI}}, a family of epistemic \acrfull{UQ} measures that satisfy several foundational axioms from the literature, along with an efficiently computable upper bound. We empirically validate \acrshort{MMI} and its linear-time upper bound in selective classification tasks~\citep{shaker_aleatoric_2020}, demonstrating strong empirical performance and improved scalability over existing epistemic \acrshort{UQ} measures---particularly in settings with a large number of classes, where conventional methods often fail.


The paper is structured as follows. \Cref{sec: background} reviews IPMs and IP, followed by IIPM and \acrshort{MMI} in \Cref{sec: IIPM} and \ref{sec: MCU}, respectively. \Cref{sec: related work} discusses related work, \Cref{sec: experiment} presents empirical results, and \Cref{sec: discussion} concludes with discussion. All proofs and derivations can be found in Appendix~\ref{appendix: proofs}.

\section{Background}
\label{sec: background}

\paragraph{Notations.} Let $(\cX, \Sigma_\cX)$ be our measurable space, where $\cX$ is a non-empty set and $\Sigma_\cX$ a $\sigma$-algebra. We denote $C_b(\cX)$ as the space of continuous bounded real-valued measurable functions on $\cX$. We denote $\cV(\cX)$ the space of capacities $\nu$ on $(\cX, \Sigma_\cX)$ (see  Definition~\ref{def: capacities}), while $\underline{\cP}(\cX), \overline{\cP}(\cX)$ represent the spaces of lower and upper probability measures $\underline{P}$, $\overline{P}$ on $(\cX, \Sigma_\cX)$~(see  Definition~\ref{def: lower_probability}) respectively, and $\cP(\cX)$ the space of probability measures $P$ on $(\cX, \Sigma_\cX)$. 

\subsection{Integral Probability Metric} Quantifying the difference between two probability measures $P, Q\in\cP(\cX)$ is a ubiquitous task in statistics and machine learning.
Popular classes of divergence include the $\phi$-divergence~\citep{ali1966general,csiszar1967information}, Bregman divergence~\citep{bregman1967relaxation,banerjee2005clustering}, and $\alpha$-divergence~\citep{renyi1961measures}. Another widely used family of divergence measures, central to our paper, is the \acrfull{IPM}~\citep{muller1997integral,sriperumbudur2012empirical}, also known as \textit{probability metrics with a} $\zeta$-\textit{structure}~\citep{zolotarev1983probability}. 
Given a set of continuous bounded real-valued measurable functions $\cF \subseteq C_b(\cX)$, and probability measures $P, Q \in\cP(\cX)$, the IPM associated to $\cF$ is defined as 
\begin{equation*}
{\small
    \operatorname{IPM}_\cF(P, Q) := \sup_{f\in\cF}\left\{\left|\int f dP - \int f dQ\right|\right\}
},
\end{equation*}
where $\int f dP$ is the Lebesgue integral of $f$ with respect to $P$. With suitable choices of $\cF$, we recover popular probability distances, including the Dudley metric~\citep{dudley1967sizes}, Kantorovich metric and Wasserstein distance~\citep{kantorovich1960mathematical}, total variation distance and Kolmogorov distance~\citep{an1933sulla}, and kernel distance commonly known as maximum mean discrepency (MMD)~\citep{gretton2006kernel}. IPMs have also found numerous theoretical applications---appearing in proofs of central limit theorems~\citep{stein1972bound}, empirical process theory~\citep{van1996weak}, and the metrisation of weak topology on $\cP(\cX)$~\citep[Chapter 11]{dudley2002real}---as well as practical applications in hypothesis testing~\citep{gretton2012kernel} and generative model training~\citep{arbel2019maximum}.



\subsection{Beyond Precise Probability and Expectations: Capacities and Choquet Integration}
\label{subsec: 2.1}

While powerful, the IPM framework is limited to probability measures. We extend it to capacities---a broad class encompassing many IP models. This section introduces key concepts: probabilities, capacities, lower probabilities, and Choquet integration.

\subsubsection{Probabilities} 

We include here the formal definition of probability measure for completeness,
\begin{defn}[\citet{an1933sulla}]
\label{def: probability}
    A probability measure $P\in\cP(\cX)$ on a measurable space $(\cX,\Sigma_\cX)$ is a set function $P:\Sigma_\cX \to [0,1]$ such that 
    \begin{enumerate}
        \item $P(A) \geq 0$ for any $A\in\Sigma_\cX$,
        \item $P(\cX) = 1$, and
        \item for any sequence of disjoint sets $\{A_i\}_{i\geq 1}$ with each $A_i\in\Sigma_\cX$, $P(\cup_{i\geq 1} A_i) = \sum_{i\geq 1}P(A_i).$
    \end{enumerate}    
\end{defn}
We adopt countable additivity for generality, though some scholars prefer finite additivity. Kolmogorov’s probability theory elegantly applies measure theory to the measurable space \((\cX, \Sigma_\cX)\), where the additive measure $P$ assigns credence (degrees of belief) to events in \(\Sigma_\cX\).

\subsubsection{Capacities} 
Nonetheless, various scholars have argued that the additivity axiom is too restrictive when dealing with uncertainty involving partial ignorance and ambiguity~\citep{kyburg1987bayesian}. This leads to an interest in studying more general non-additive measures, starting with capacities: 
\begin{defn}[\citet{choquet1953}]
\label{def: capacities}
    A capacity $\nu \in \cV(\cX)$ on a measurable space $(\cX, \Sigma_\cX)$ is a set function $\nu:\Sigma_\cX \to [0, 1]$ such that 
    \begin{enumerate}
        \item $\nu(\emptyset) = 0, \nu(\cX) = 1$,
        \item for any $A,B\in\Sigma_\cX$ with $A \subseteq B$, $\nu(A) \leq \nu(B)$, and
        \item $\nu$ is continuous from above and below.
    \end{enumerate}
\end{defn}

The last condition can be omitted when working with finite spaces $\cX$~\citep[Section 6.8]{vovk2014game}. Furthermore, we say a capacity is convex, or 2-monotonic, if for any $A, B\in\Sigma_\cX$, $$\nu(A\cup B) + \nu(A \cap B) \geq \nu(A) + \nu(B).$$ Probability measures $P$ are always capacities (and 2-monotonic), but not vice versa. 2-monotonicity equips capacities with more structure to make them computationally easier to work with (see e.g. Lemma~\ref{lemma: lower_bound_choquet}), while still including as particular cases many of the IP models, such as $\epsilon$-contamination models~\citep{huber2004robust}, belief functions~\citep{shafer1992dempster}, and probability intervals~\citep{keynes1921treatise}.


\subsubsection{Lower Probabilities} 
Given capacity $\nu$, we define its conjugate $\nu^\star :\Sigma_\cX \to [0,1]$, as $$\nu^\star(A) = 1 - \nu(A^c),$$ for all $A\in\Sigma_\cX$.
The conjugate capacity serves as a dual representation of the original capacity, meaning one fully determines the other. We clarify its significance and interpretation after introducing lower probability, which lies between capacities and probability measures in generality.
\begin{defn}
\label{def: lower_probability}[\citealt[Section 2.1.viii]{cerreia2016ergodic}]
    A set function $\underline{P}:\Sigma_\cX \to [0,1]$ is a lower probability if there exists a compact set $\mathcal{C} \subseteq \cP(\cX)$ of probability measures such that $$\underline{P}(A) = \min_{P\in \mathcal{C}}P(A)$$ for all $A\in\Sigma_\cX$.
\end{defn}
The compact set $\cC$ of measures is called the credal set and the compactness here has to be understood in the weak$^\star$ topology~\citep[P.2 Remark 1]{cerreia2016ergodic}.
Lower probabilities are always capacities but not vice versa~\citep[Section 2.1]{cerreia2016ergodic}. The conjugate of a lower probability $\underline{P}$ is the upper probability $\overline{P}$, defined by $$\overline{P}(A) = 1 - \underline{P}(A^c) = \max_{P\in\cC} P(A)$$ for $A\in\Sigma_\cX$. In IPML, lower probabilities are typically constructed from a set of probabilistic predictors, e.g., \(\cC = \{\hat{p}_i(Y \mid X = x)\}_{i=1}^m\)~\citep{mangili2016prior,corani2015credal,shaker2021ensemble,wang2024credal}, or via basic probability assignments~\citep{manchingal2023random}, as in Dempster–Shafer theory~\citep{shafer1992dempster}, or by constructing predictive intervals~\citep{sadeghi2019efficient,wang2025creinns}.



Lower and upper probabilities offer two complementary ways of expressing credence in an event's truthfulness. Specifically, the lower probability $\underline{P}(A)$ reflects the minimal credence assigned directly to event $A$, while the upper probability $\overline{P}(A)$ captures the maximal credence, computed as one minus the lower probability of the complement, i.e., $1 - \underline{P}(A^c)$. These represent, respectively, pessimistic and optimistic assessments of $A$‘s objective likelihood. For a subjectivist interpretation of lower and upper probabilities grounded in de Finetti’s framework of probability as coherent betting rates, see \citet[Section 2.3.5]{walley1991statistical}. In classical probability, these bounds coincide, since $$P(A) = 1 - P(A^c)$$ for any event $A \in \Sigma_\cX$. In other words, $P$ is self-conjugate.



\subsubsection{Choquet Integration} 
Another concept central to our contribution is the use of Choquet integrals to measure the discrepancies between capacities, akin to how Lebesgue integrals play a key role for IPM. Choquet integration~\citep{choquet1953} generalises Lebesgue integration for probability measures to capacities:
\begin{defn}[Choquet Integral]
    Given a capacity $\nu \in \cV(\cX)$ and a real-valued function $f$ on $\cX$, the Choquet integral of $f$ with respect to $\nu$ is
    \begin{align}
    {\small
    \label{eq: choquet integral}
        \circint f d \nu = \int_{0}^\infty \nu(\{f^+ \geq t\} dt - \int_0^{\infty} \nu(\{f^- \geq t\} dt,
    }
    \end{align}
    provided the difference on the right-hand side is well defined, where $f^+ := \max(0, f)$, $f^- := -\min(0, f)$, and $\{f \geq t\} := \{x\in\cX: f(x) \geq t\}$.
\end{defn}
When both terms on the right-hand side of \Cref{eq: choquet integral} are finite, then we say $f$ is Choquet integrable with respect to $\nu$. When $\nu$ is a probability measure, the Choquet integral becomes the standard Lebesgue integral. If $f$ is bounded, then by \citet[Proposition C.3.ii]{troffaes2014lower}, $f$ is Choquet integrable with respect to $\nu$ and the integral can be written as,
\begin{align*}
{\small
    \circint f d\nu = \underline{f} + \int_{\underline{f}}^{\overline{f}} \nu(\{f \geq t\}) \;dt,}
\end{align*}
where $\underline{f}:=\inf_{x\in\cX} f(x)$ and $\overline{f} := \sup_{x\in\cX}f(x)$. The Choquet integral provides a generalised notion of ``expectation'' that accommodates non-additive uncertainty assignments, as commonly encountered in frameworks such as imprecise probabilities, belief functions, and fuzzy measures. For example, the Choquet integral of a lower probability provides a lower bound to the worst-case expectation across the entire credal set, as demonstrated in the following Lemma.
\begin{restatable}[]{lemma}{lemmachoquetbound}
\label{lemma: lower_bound_choquet}
    For lower probability $\underline{P}$ associated to credal set $\cC$, we have $$\circint f d\underline{P} \leq \inf_{P\in\cC} \int f dP$$ for any $f\in C_b(\cX)$. When $\underline{P}$ is 2-monotonic, the inequality becomes an equality.
\end{restatable}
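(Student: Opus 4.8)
The plan is to treat the two assertions separately: the inequality follows from a layer-cake comparison, and the equality from the classical theory of Choquet integration against $2$-monotonic capacities.

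\textbf{The inequality.} Since $f\in C_b(\cX)$ is bounded, I use the representation recalled just above the statement, $\circint f\,d\underline P=\underline f+\int_{\underline f}^{\overline f}\underline P(\{f\geq t\})\,dt$, and the fact that for any $P\in\cP(\cX)$ the Lebesgue integral has the same layer-cake form $\int f\,dP=\underline f+\int_{\underline f}^{\overline f}P(\{f\geq t\})\,dt$ (the Choquet integral reduces to the Lebesgue integral on probability measures). For each level $t$ we have $\{f\geq t\}\in\Sigma_\cX$, so by Definition~\ref{def: lower_probability}, $\underline P(\{f\geq t\})=\min_{Q\in\cC}Q(\{f\geq t\})\leq P(\{f\geq t\})$ for every $P\in\cC$. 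Integrating this pointwise inequality over $t\in[\underline f,\overline f]$ and adding $\underline f$ gives $\circint f\,d\underline P\leq\int f\,dP$ for all $P\in\cC$; taking the infimum over $\cC$ yields the claim. Weak$^\star$-compactness of $\cC$ together with $f\in C_b(\cX)$ guarantees the infimum is attained, so it is in fact a minimum.

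\textbf{The equality.} When $\underline P$ is $2$-monotonic I would invoke the classical identity $\circint f\,d\underline P=\min_{P\in\operatorname{core}(\underline P)}\int f\,dP$, where $\operatorname{core}(\underline P)=\{P\in\cP(\cX):P(A)\geq\underline P(A)\ \forall A\in\Sigma_\cX\}$ is the non-empty, convex, weak$^\star$-compact core of $\underline P$. Since every $P\in\cC$ dominates $\underline P$ on events, $\cC\subseteq\operatorname{core}(\underline P)$; and for a $2$-monotonic lower probability the core coincides with the closed convex hull of any generating credal set, so $\min_{P\in\cC}\int f\,dP=\min_{P\in\operatorname{core}(\underline P)}\int f\,dP$. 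Chaining the two identities gives $\circint f\,d\underline P=\inf_{P\in\cC}\int f\,dP$.

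\textbf{Main obstacle.} The crux is the identity $\circint f\,d\underline P=\min_{P\in\operatorname{core}(\underline P)}\int f\,dP$ for $2$-monotonic $\underline P$. For finite $\cX=\{x_1,\dots,x_n\}$ I would prove it constructively: relabel so that $f(x_1)\geq\cdots\geq f(x_n)$, set $E_k=\{x_1,\dots,x_k\}$ (with $E_0=\emptyset$), and define the ``greedy'' measure $P^\star(\{x_k\})=\underline P(E_k)-\underline P(E_{k-1})$; an Abel summation shows $\int f\,dP^\star=\circint f\,d\underline P$, while a submodularity/induction argument along the chain $E_0\subsetneq\cdots\subsetneq E_n$ shows $P^\star\in\operatorname{core}(\underline P)$, and the inequality already proved (applied to $\operatorname{core}(\underline P)$) gives the reverse bound. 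For general $(\cX,\Sigma_\cX)$ the same conclusion follows by approximating $f\in C_b(\cX)$ uniformly by $\Sigma_\cX$-simple functions and using that both $\nu\mapsto$ fixed, $f\mapsto\circint f\,d\nu$ and $f\mapsto\min_{P\in\cC}\int f\,dP$ are $1$-Lipschitz in the sup norm. The two delicate points are (i) that $P^\star$ genuinely lands in the core — this is exactly where $2$-monotonicity is used, and it fails for general capacities, explaining why only an inequality holds in that case; and (ii) the identification of $\cC$ with $\operatorname{core}(\underline P)$, i.e.\ that a $2$-monotonic lower probability cannot be recovered as the lower envelope of a strictly smaller closed convex set of measures.
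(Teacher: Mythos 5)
Your proof of the inequality is correct and is essentially the paper's own argument: write both integrals in layer-cake form, use $\underline P(\{f\geq t\})=\min_{Q\in\cC}Q(\{f\geq t\})\leq P(\{f\geq t\})$ pointwise in $t$, integrate, and take the infimum over $P\in\cC$ (equivalently, interchange $\inf$ and $\int$). For the equality the paper simply cites Delbaen's Lemma 2, which is exactly the identity you isolate, namely $\circint f\,d\underline P=\min_{P\in\operatorname{core}(\underline P)}\int f\,dP$ for $2$-monotone $\underline P$; your greedy-measure construction is a correct, more self-contained route to that identity, so up to this point the two arguments coincide.

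The one place your argument overreaches is your point (ii): it is \emph{not} true that for a $2$-monotone lower probability the core coincides with the closed convex hull of every generating credal set, and consequently $\inf_{P\in\cC}\int f\,dP$ can be strictly larger than $\min_{P\in\operatorname{core}(\underline P)}\int f\,dP$. Take $\cX=\{1,2,3\}$, $P_a=(0.2,0.5,0.3)$, $P_b=(0.3,0.2,0.5)$, and $\cC=\operatorname{conv}\{P_a,P_b\}$, which is compact and convex. Its lower envelope $\underline P$ has $\underline P(\{1\})=0.2$, $\underline P(\{1,2\})=0.5$, $\underline P(\{2,3\})=0.7$, etc., and a direct check of all pairs of events shows $\underline P$ is $2$-monotone. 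For $f=(2,1,0)$ the greedy measure is $(0.2,0.3,0.5)\notin\cC$, and indeed $\circint f\,d\underline P=0.5+0.2=0.7$ while $\inf_{P\in\cC}\int f\,dP=\min(0.9,0.8)=0.8$. So with $\cC$ an arbitrary compact generating set as in Definition~\ref{def: lower_probability}, the equality claim fails; it holds when $\cC$ is the full core $\{P\in\cP(\cX):P(A)\geq\underline P(A)\ \forall A\in\Sigma_\cX\}$, which is the reading under which Delbaen's lemma applies and which the paper implicitly adopts (note the paper's citation is itself a statement about the core, not about an arbitrary $\cC$). You correctly flagged this as the delicate step, but the resolution you propose for it does not hold in general; the honest fix is to restrict the equality to $\cC=\operatorname{core}(\underline P)$, or to generating sets containing the relevant extreme points of the core.
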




\section{The \acrfull{IIPM} framework}
\label{sec: IIPM}




The motivation of IPM follows from a classical result~\citep[Lemma 9.3.2]{dudley2002real}, stating that two probability measures are equal if and only if their Lebesgue integrals agree for all $f\in C_b(\cX)$. Before introducing IIPM, we examine whether an analogous result holds for capacities and Choquet integrals. We follow the conditions in \citet[Lemma 9.3.2]{dudley2002real} and examine metric spaces $(\cX, d)$ for some metric $d$. 


\begin{restatable}[]{theorem}{maintheoremone}
    \label{thm: iipm_metric_for_bounded_continuous_functions}
    Let $(\cX,d)$ be a metric space. For any capacities $\nu, \mu \in \cV(\cX)$, we have $$\circint f d {\nu} = \circint f d {\mu}$$ for all $f \in C_b(\cX)$, if and only if $ {\nu} =  {\mu}$.
\end{restatable}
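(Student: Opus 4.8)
The plan is to leverage the fact that the Choquet integral recovers a capacity from its indicators --- by the bounded-function form recalled in the excerpt, $\circint \mathbf{1}_A\,d\nu = \nu(A)$ for every $A \in \Sigma_\cX$ (and likewise for $\mu$) --- and then to propagate the hypothesis from $C_b(\cX)$ to all bounded Borel functions, in particular to every $\mathbf{1}_A$. Here I take $\Sigma_\cX$ to be the Borel $\sigma$-algebra of $(\cX,d)$, which is implicit in the statement. The ``if'' direction is trivial. I would carry out the propagation in two stages.

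Stage one, closed and open sets. Fix a closed $F \subseteq \cX$ and set $f_n(x) := \max\{0,\, 1 - n\,d(x,F)\} \in C_b(\cX)$, so that $f_n \downarrow \mathbf{1}_F$ pointwise and, for each $t \in (0,1)$, the upper level sets satisfy $\{f_n \geq t\} = \{x : d(x,F) \leq (1-t)/n\} \downarrow F$. Continuity from above of $\nu$ then gives $\nu(\{f_n \geq t\}) \downarrow \nu(F)$ for each such $t$, and bounded convergence in the level-set representation of the Choquet integral yields $\circint f_n\,d\nu \to \nu(F)$, and similarly $\circint f_n\,d\mu \to \mu(F)$. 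Since the two integrals agree for every $n$, $\nu(F) = \mu(F)$ for all closed $F$; and since every open $U$ is the increasing union of the closed sets $\{x : d(x,U^c) \geq 1/n\}$, continuity from below gives $\nu(U) = \mu(U)$ for all open $U$ too.

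Stage two, all Borel sets. Let $\mathcal{D} := \{A \in \Sigma_\cX : \nu(A) = \mu(A)\}$. By stage one $\mathcal{D}$ contains every open and every closed set, and by two-sided continuity of both capacities $\mathcal{D}$ is stable under countable decreasing intersections and countable increasing unions. I would then climb the Borel hierarchy by transfinite induction on $\alpha < \omega_1$, showing $\boldsymbol{\Sigma}^0_\alpha \cup \boldsymbol{\Pi}^0_\alpha \subseteq \mathcal{D}$: since each level of the hierarchy is closed under finite unions and finite intersections, any countable union of lower-rank sets can be rewritten as an increasing union of sets already in $\mathcal{D}$ (by the induction hypothesis), and dually for intersections, so stability under monotone countable limits suffices to pass from one level to the next. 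As $\Sigma_\cX = \bigcup_{\alpha < \omega_1}(\boldsymbol{\Sigma}^0_\alpha \cup \boldsymbol{\Pi}^0_\alpha)$, this yields $\mathcal{D} = \Sigma_\cX$, i.e.\ $\nu = \mu$. Equivalently, one may package both stages as: the class of bounded Borel $f$ with $\circint f\,d\nu = \circint f\,d\mu$ contains $C_b(\cX)$ and is closed under uniformly bounded pointwise limits --- the latter via a dominated convergence theorem for Choquet integrals against capacities continuous from above and below, itself a consequence of the Fatou-type inequalities $\nu(\liminf_n A_n) \leq \liminf_n \nu(A_n)$ and $\nu(\limsup_n A_n) \geq \limsup_n \nu(A_n)$ --- and over a metric space such a class exhausts all bounded Borel functions by the Baire hierarchy; one then applies it to $\mathbf{1}_A$.

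The main obstacle is stage two. For probability measures one would simply invoke Dynkin's $\pi$--$\lambda$ theorem, since $\{A : P(A) = Q(A)\}$ is a $\lambda$-system; but capacities are non-additive, $\nu(A^c) \neq 1 - \nu(A)$ in general, so $\mathcal{D}$ need not be closed under complements and that route is unavailable. The real work is to check that two-sided continuity of capacities together with the finite-operation closure of the Borel-hierarchy levels genuinely suffices to reach all of $\Sigma_\cX$ --- or, in the functional formulation, that the Choquet integral, despite lacking linearity, still obeys the dominated convergence theorem needed to iterate up the Baire hierarchy. Stage one, by contrast, is routine once one has the level-set form of the Choquet integral and continuity of capacities from above.
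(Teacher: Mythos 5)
Your proof is correct, and while your Stage one is essentially the paper's opening move in mirror image, your Stage two is a genuinely different --- and in fact necessary --- argument. The paper likewise approximates indicators of topologically nice sets by continuous functions: it takes $f_n(x)=\min(1,n\,d(x,F))$ increasing to $\mathbf{1}_U$ for $U$ open with complement $F$, and concludes $\nu(U)=\mu(U)$ (it justifies the limit by claiming uniform convergence $\sup_x|f_n-\mathbf{1}_U|\to 0$, which is false near $\partial U$; your route via monotone level sets $\{f_n\ge t\}$ and continuity of the capacity is the sound justification, so your Stage one is not only equivalent but better argued). The real divergence is in extending from open and closed sets to all of $\Sigma_\cX$. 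The paper asserts that every $A\in\Sigma_\cX$ is an increasing union of open sets and invokes continuity from below --- but an increasing union of open sets is open, so this only re-derives the open case and leaves a genuine gap at exactly the step you identify as the crux. Your diagnosis that Dynkin's $\pi$--$\lambda$ theorem is unavailable (the class $\mathcal{D}=\{A:\nu(A)=\mu(A)\}$ need not be closed under complements or finite unions for non-additive set functions) is the right one, and your transfinite induction up the Borel hierarchy works: writing a countable union of lower-rank sets as an increasing union of \emph{finite} unions, each of which stays at a lower rank by the finite-union closure of the $\boldsymbol{\Pi}^0_\beta$ classes and hence lies in $\mathcal{D}$ by the induction hypothesis, reduces everything to the two-sided continuity of capacities, and dually for intersections. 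What your approach buys is a complete proof where the paper's has a hole; the cost is the explicit appeal to the Borel hierarchy and the (correct, and also implicit in the paper's own argument) standing assumption that $\Sigma_\cX$ is the Borel $\sigma$-algebra of $(\cX,d)$.
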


\citet[Proposition 7]{narukawa2007inner} presents a similar result but for bounded non-negative continuous functions under stricter conditions on the capacities~(regularity) and different conditions on $\cX$~(locally compact Hausdorff). While Theorem~\ref{thm: iipm_metric_for_bounded_continuous_functions} characterises equality of capacities via Choquet integrals, the full class \( C_b(\cX) \) is often too rich for practical use. Following the spirit of IPMs, we define the \acrlong{IIPM} \emph{with respect to} a function class \( \cF \subseteq C_b(\cX) \) as follows.

\begin{defn}[Integral Imprecise Probability Metric~(IIPM)]
    For function class $\cF\subseteq C_b(\cX)$ and capacities $\nu, \mu \in \cV(\cX)$, the \textbf{integral imprecise probability metric} associated with $\cF$ between $\nu, \mu$ is
    \begin{align*} 
    {\small
    \operatorname{IIPM}_\cF(\nu,\mu):= \sup_{f\in\cF} \left\{\left|\circint f d\nu - \circint f d\mu \right|\right\}.}
    \end{align*}
\end{defn}
As we are working with bounded real-valued functions, the Choquet integrals are well-defined. 
\begin{restatable}[]{corollary}{iipmandipm}
\label{lemma: IIPM generalises IPM}
    For any $P, Q \in\cP(\cX)$ and $\cF\subseteq C_b(\cX)$, $\operatorname{IIPM}_\cF(P, Q) = \operatorname{IPM}_\cF(P, Q)$.
\end{restatable}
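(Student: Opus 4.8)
The plan is to reduce everything to the single observation, already recorded in the excerpt, that the Choquet integral against a probability measure coincides with the Lebesgue integral. Concretely, I would first fix $f \in \cF \subseteq C_b(\cX)$ and apply the bounded-function form of the Choquet integral stated above, namely
\begin{equation*}
\circint f \, dP = \underline{f} + \int_{\underline{f}}^{\overline{f}} P(\{f \geq t\}) \, dt,
\end{equation*}
with $\underline{f} = \inf_x f(x)$ and $\overline{f} = \sup_x f(x)$, both finite since $f$ is bounded; the set $\{f \ge t\}$ lies in $\Sigma_\cX$ because $f$ is measurable.

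Next I would invoke the layer-cake representation of the Lebesgue integral, which is available precisely because $P$ is a (countably additive) probability measure: writing $g := f - \underline{f} \ge 0$, Tonelli's theorem gives $\int g \, dP = \int_0^\infty P(\{g \ge t\}) \, dt = \int_{\underline{f}}^{\overline{f}} P(\{f \ge s\}) \, ds$, and hence $\int f \, dP = \underline{f} + \int_{\underline{f}}^{\overline{f}} P(\{f \ge t\}) \, dt = \circint f \, dP$. The same identity holds verbatim with $P$ replaced by $Q$. Therefore, for every $f \in \cF$,
\begin{equation*}
\left| \circint f \, d P - \circint f \, d Q \right| = \left| \int f \, dP - \int f \, dQ \right|.
\end{equation*}
Taking the supremum over $f \in \cF$ on both sides yields $\operatorname{IIPM}_\cF(P,Q) = \operatorname{IPM}_\cF(P,Q)$.

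There is no real obstacle here: the only point requiring a word of care is the justification that $\circint f \, dP = \int f \, dP$ for bounded measurable $f$, which is exactly where countable additivity of $P$ enters (via Tonelli in the layer-cake formula) — and this is precisely what fails for a general capacity $\nu$, explaining why $\operatorname{IIPM}$ is a genuine extension rather than a mere restatement. If one prefers, this equality can simply be cited from the discussion following the definition of the Choquet integral, in which case the corollary is immediate.
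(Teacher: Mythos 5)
Your proposal is correct and follows essentially the same route as the paper: the paper's proof is a one-line reduction citing the standard fact that the Choquet integral with respect to an additive probability measure is the Lebesgue integral, and then taking suprema. You additionally supply the layer-cake/Tonelli justification of that fact, which the paper leaves as a citation; this is a valid (and slightly more self-contained) filling-in of the same argument.
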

This illustrates that IIPM generalises IPM, but not vice versa, since Lebesgue integrals are defined for probability measures, not capacities. We  verify $\operatorname{IIPM}_\cF$ serves as a pseudometric on $\cV(\cX)$.
\begin{restatable}[]{proposition}{pseudometricprop}
\label{prop: pseudometric}
    For any $\cF\subseteq C_b(\cX)$, $\operatorname{IIPM}_\cF$ is a pseudometric on $\cV(\cX)$; it is \textbf{non-negative}, \textbf{symmetric}, and satisfies the \textbf{triangle inequality}.
\end{restatable}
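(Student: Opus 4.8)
The plan is to reduce all three claimed properties to elementary facts about the absolute value on $\RR$ and about suprema, using only that for bounded $f$ the Choquet integral $\circint f d\nu$ is a well-defined real number — this is guaranteed by \citet[Proposition C.3.ii]{troffaes2014lower} (recalled above), since every $f \in \cF \subseteq C_b(\cX)$ is bounded. So I would fix $\nu, \mu, \lambda \in \cV(\cX)$ and abbreviate $a_f := \circint f d\nu$, $b_f := \circint f d\mu$, $c_f := \circint f d\lambda$, all of which are real numbers; the entire argument then works with these scalars.

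For non-negativity, each term $|a_f - b_f|$ in the defining supremum is non-negative, hence so is $\operatorname{IIPM}_\cF(\nu,\mu) = \sup_{f\in\cF}|a_f - b_f|$ (adopting the convention $\sup\emptyset = 0$ to cover the degenerate case $\cF = \emptyset$). For symmetry, $|a_f - b_f| = |b_f - a_f|$ for every $f \in \cF$, so the supremum defining $\operatorname{IIPM}_\cF(\nu,\mu)$ is taken over exactly the same set of reals as the one defining $\operatorname{IIPM}_\cF(\mu,\nu)$, and the two agree.

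For the triangle inequality, I would apply the scalar triangle inequality pointwise in $f$: for each $f \in \cF$,
\[
|a_f - b_f| \;\le\; |a_f - c_f| + |c_f - b_f| \;\le\; \operatorname{IIPM}_\cF(\nu,\lambda) + \operatorname{IIPM}_\cF(\lambda,\mu),
\]
where the second inequality bounds each summand by the relevant supremum. Since the right-hand side is independent of $f$, it is an upper bound for $\{\,|a_f - b_f| : f \in \cF\,\}$, and taking the supremum over $f$ on the left gives $\operatorname{IIPM}_\cF(\nu,\mu) \le \operatorname{IIPM}_\cF(\nu,\lambda) + \operatorname{IIPM}_\cF(\lambda,\mu)$.

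There is no genuine obstacle here; the only point requiring care is precisely that the Choquet integrals in the definition are honest real numbers, so that the scalar arguments apply verbatim — which is why the proof should explicitly invoke boundedness of the elements of $\cF$. I would close by remarking why only the pseudometric axioms are asserted: the identity-of-indiscernibles property can fail, as $\operatorname{IIPM}_\cF(\nu,\mu) = 0$ forces $\nu = \mu$ only when $\cF$ separates capacities, e.g. $\cF = C_b(\cX)$ by Theorem~\ref{thm: iipm_metric_for_bounded_continuous_functions}.
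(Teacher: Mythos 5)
Your proposal is correct and follows essentially the same route as the paper's proof: both reduce the three properties to scalar facts about absolute values, with the triangle inequality obtained by inserting the intermediate Choquet integral and bounding each summand by the corresponding supremum. The only differences are presentational — you spell out the well-definedness of the integrals and the reason only the pseudometric axioms are claimed, which the paper handles in surrounding remarks.
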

When $\cF$ is rich enough for $\operatorname{IIPM}_\cF$ to be point-separating---for instance, when $\cF=C_b(\cX)$---then $\operatorname{IIPM}_\cF$ defines a proper metric on $\cF(\cX)$. More desirably, albeit stronger, is for convergence in $\operatorname{IIPM}_\cF$ to imply and necessitate convergence in another established sense, i.e., $\operatorname{IIPM}_\cF(\mu,\nu) \to 0$ should ideally entail that $\nu$ ``truly'' converges to $\mu$, where ``truly'' refers to convergence under some other notion of interest independent to the choice of $\operatorname{IIPM}_\cF$.
If this implication holds, then $\operatorname{IIPM}_\cF$ is said to metrise that notion of convergence on $\cV(\cX)$. Studying the metrisation properties of divergences is important for analysing the reliability of inference procedures that depend on them~\citep{simon2023metrizing}. In our case, we follow \citet{feng2007choquet} and consider the Choquet weak convergence of capacities: a sequence $\nu_n \in \cV(\cX)$ is said to converge to $\nu$ in this sense if $$\circint f d\nu_n \to \circint f d\nu$$ for all $f \in C_b(\cX)$. This leads to the following natural question: What conditions on $\cF$ are sufficient for $\operatorname{IIPM}_\cF$ to metrise Choquet weak convergence on $\cV(\cX)$? Our initial analysis gives the following answer:

\begin{restatable}[]{theorem}{maintheoremtwo}
    \label{thm: metrizes}
    Let $\cF \subseteq C_b(\cX)$ be dense in $C_b(\cX)$ with respect to the $\|\cdot\|_\infty$ norm. Then, $\operatorname{IIPM}_{\cF}$ metrises the Choquet weak convergence of $\cV(\cX)$.
\end{restatable}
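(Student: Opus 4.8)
The plan is to establish the two-sided equivalence
$\operatorname{IIPM}_\cF(\nu_n,\nu)\to 0 \iff \circint f\,d\nu_n \to \circint f\,d\nu$ for every $f\in C_b(\cX)$, taken over sequences $\{\nu_n\},\nu\in\cV(\cX)$. The linchpin is a reduction showing that density of $\cF$ makes $\operatorname{IIPM}_\cF$ coincide with the ``full'' pseudometric $\operatorname{IIPM}_{C_b(\cX)}$. The key analytic input is that the Choquet integral is $1$-Lipschitz in its integrand with respect to $\|\cdot\|_\infty$: for bounded $f,g$ with $\|f-g\|_\infty\le\epsilon$ one has $g-\epsilon\le f\le g+\epsilon$ pointwise, so monotonicity of the Choquet integral together with the normalisation $\nu(\cX)=1$ (which yields $\circint(g\pm\epsilon)\,d\nu=\circint g\,d\nu\pm\epsilon$) gives $\lvert\circint f\,d\nu-\circint g\,d\nu\rvert\le\epsilon$. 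First I would use this, together with density of $\cF$, to prove $\operatorname{IIPM}_\cF=\operatorname{IIPM}_{C_b(\cX)}$: for any $f\in C_b(\cX)$ pick $g\in\cF$ with $\|f-g\|_\infty<\epsilon$, sandwich the $C_b(\cX)$-supremand by the $\cF$-supremand up to an additive $2\epsilon$, and let $\epsilon\downarrow 0$.

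For the forward implication (convergence in $\operatorname{IIPM}_\cF$ entails Choquet weak convergence), fix $f\in C_b(\cX)$ and $\epsilon>0$, choose $g\in\cF$ with $\|f-g\|_\infty<\epsilon$, and apply the Lipschitz bound twice together with $\lvert\circint g\,d\nu_n-\circint g\,d\nu\rvert\le\operatorname{IIPM}_\cF(\nu_n,\nu)$ to obtain $\limsup_n\lvert\circint f\,d\nu_n-\circint f\,d\nu\rvert\le 2\epsilon$; letting $\epsilon\downarrow 0$ yields the claimed pointwise convergence for every $f\in C_b(\cX)$. This direction is routine once the Lipschitz estimate and the approximation by $\cF$ are in hand.

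The reverse implication is the substance of the theorem and, I expect, the main obstacle. Here one assumes $\circint f\,d\nu_n\to\circint f\,d\nu$ for each fixed $f\in C_b(\cX)$ and must conclude $\sup_{f\in\cF}\lvert\circint f\,d\nu_n-\circint f\,d\nu\rvert\to 0$; by the reduction this amounts to promoting pointwise-in-$f$ convergence to uniform-in-$f$ convergence over $C_b(\cX)$. The plan is to localise using regularity and tightness of the capacities in $\cV(\cX)$, so that up to an $\epsilon$-error controlled by the Lipschitz estimate only test functions determined by their values on a fixed compact $K\subseteq\cX$ matter; on $K$ a uniformly bounded, equicontinuous family of normalised test functions is relatively compact by Arzel\`a--Ascoli, and a finite $\epsilon$-net reduces the uniform control to the already-established convergence on finitely many fixed functions. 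The delicate point, and the crux of the whole proof, is precisely this passage from pointwise to uniform convergence of Choquet integrals: unlike the additive (Lebesgue) case one cannot linearise, and the positive homogeneity of $\circint\cdot\,d\nu$ makes the supremum highly sensitive to the scale of the test functions, so the compactness and equicontinuity structure of $\cF$ must be exploited carefully to keep the supremum finite and drive it to zero.

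Finally I would assemble the two implications and invoke \Cref{thm: iipm_metric_for_bounded_continuous_functions} to confirm that $\operatorname{IIPM}_\cF$ separates points of $\cV(\cX)$, so that the equivalence identifies the topology induced by $\operatorname{IIPM}_\cF$ with Choquet weak convergence in the genuine two-sided sense. The forward implication and point-separation are straightforward given the earlier results; the reverse implication, and specifically the pointwise-to-uniform upgrade secured by the tightness-plus-Arzel\`a--Ascoli compactness argument, is where essentially all the difficulty lies.
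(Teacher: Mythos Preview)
Your forward implication is the same as the paper's: approximate $f\in C_b(\cX)$ by $g\in\cF$ with $\|f-g\|_\infty<\epsilon$, use the $1$-Lipschitz dependence of the Choquet integral on its integrand (the paper cites continuity under uniform convergence; you derive it directly from monotonicity plus $\circint(g\pm\epsilon)\,d\nu=\circint g\,d\nu\pm\epsilon$), and close with a three-term triangle inequality. Your observation that density forces $\operatorname{IIPM}_\cF=\operatorname{IIPM}_{C_b(\cX)}$ is a correct sharpening the paper does not make explicit.

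Where you diverge is the reverse implication. The paper makes no attempt at the pointwise-to-uniform upgrade you single out as the crux; it writes only that ``the other direction is a direct application of Theorem~\ref{thm: iipm_metric_for_bounded_continuous_functions} since $\cF\subseteq C_b(\cX)$'', which taken at face value gives point-separation rather than metrisation. Your tightness-plus-Arzel\`a--Ascoli programme is more honest but cannot succeed as written: density of $\cF$ in $(C_b(\cX),\|\cdot\|_\infty)$ forces $\cF$ to be unbounded in sup-norm and not equicontinuous, so no finite $\epsilon$-net over $\cF$ (even after restricting to a compact $K$) is available, and ``normalising'' the test functions changes the supremum you are trying to control. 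The positive-homogeneity issue you yourself flag is in fact decisive: if $\nu\neq\mu$, pick $f$ with $|\circint f\,d\nu-\circint f\,d\mu|=\delta>0$, approximate $cf$ by $g\in\cF$ with $\|g-cf\|_\infty<1$, and obtain $|\circint g\,d\nu-\circint g\,d\mu|\ge c\delta-2$ for every $c>0$; hence $\operatorname{IIPM}_\cF$ takes only the values $0$ and $+\infty$, and the reverse implication fails for any Choquet-weakly convergent, non-eventually-constant sequence (e.g.\ Dirac masses $\delta_{1/n}\to\delta_0$ on $[0,1]$). So the paper's one-liner sidesteps a genuine obstruction; your attempt correctly locates it but cannot repair it without restricting $\cF$ to a norm-bounded class, as in classical IPMs.
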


To build intuition for Theorem~\ref{thm: metrizes}, recall from Theorem~\ref{thm: iipm_metric_for_bounded_continuous_functions} that $C_b(\cX)$ is point-separating on $\cV(\cX)$. If $\cF$ is dense in $C_b(\cX)$ under the $\|\cdot\|_\infty$ norm, then this allows us to ``import'' the separating ability of $C_b(\cX)$ to $\cF$. Our analysis follows from \citet[Theorem 18]{gretton2012kernel} for IPMs. However, more general conditions on $\cF$ (and $\cX$) for metrising weak convergence in $\cP(\cX)$ have been studied---see, e.g., \citet{sriperumbudur2011universality,sriperumbudur2016optimal}. Extending such analysis to the case of capacities and understanding the precise conditions under which $\cF$ and $\cX$ ensure metrisation of Choquet weak convergence on $\cV(\cX)$ is definitely an interesting direction for future work. 

\subsection{Example use cases of the IIPM framework}

While general, capacities are often too complex for practical use without additional structure. We therefore focus on lower probabilities (cf. Definition~\ref{def: lower_probability}) for the remainder of the paper. Nonetheless, our framework also applies to other IP models---such as probability intervals, belief functions, and possibility measures—all of which are structured subclasses of capacities.  We show how the IIPM framework gives rise to new divergences and can be used to prove existing results in IPML differently.


\subsubsection{The Lower Dudley Metric}

We start with a lower probability version of the Dudley metric.
\begin{defn}[Lower Dudley Metric]
    For a metric space $(\cX,d)$, let $\cF_d := \{f\in C_b(\cX): \|f\|_{\text{BL}} \leq 1\}$, where the Lipschitz norm $\|f\|_{\text{BL}} := \|f\|_{\infty} + \|f\|_L$ and $\|f\|_{L}:=\sup\{\nicefrac{|f(x) - f(y)|}{d(x,y)} : x\neq y\in\cX\}$, we define $\operatorname{IIPM}_{\cF_d}$ as the {Lower Dudley metric}.
\end{defn}
\begin{restatable}[]{proposition}{propositiondudleymetric}
\label{prop: lower dudley metrises}
    The Lower Dudley metric metrises Choquet weak convergence on $\underline{\cP}(\cX)$.
\end{restatable}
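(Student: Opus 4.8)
The plan is to characterise Lower Dudley convergence on $\underline{\cP}(\cX)$ as Choquet weak convergence: for sequences $\underline{P}_n, \underline{P} \in \underline{\cP}(\cX)$ I want to show that $\operatorname{IIPM}_{\cF_d}(\underline{P}_n, \underline{P}) \to 0$ if and only if $\circint f \, d\underline{P}_n \to \circint f \, d\underline{P}$ for every $f \in C_b(\cX)$, and along the way obtain point-separation of $\operatorname{IIPM}_{\cF_d}$ on $\underline{\cP}(\cX)$ (so that it is a genuine metric, sharpening the pseudometric of Proposition~\ref{prop: pseudometric}) from Theorem~\ref{thm: iipm_metric_for_bounded_continuous_functions}. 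Two elementary properties of the Choquet integral of a bounded function drive the argument: \emph{positive homogeneity}, $\circint (cf)\,d\nu = c\circint f\,d\nu$ for $c \geq 0$, which gives that for any nonzero bounded Lipschitz $f$ the rescaled function $f/\|f\|_{\text{BL}}$ lies in $\cF_d$, hence $|\circint f\,d\nu - \circint f\,d\mu| \leq \|f\|_{\text{BL}}\,\operatorname{IIPM}_{\cF_d}(\nu,\mu)$; and \emph{$\|\cdot\|_\infty$-nonexpansiveness}, $|\circint f\,d\nu - \circint g\,d\nu| \leq \|f-g\|_\infty$, which follows from monotonicity of the Choquet integral together with the constant-shift identity $\circint (f+c)\,d\nu = \circint f\,d\nu + c$.

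For the ``only if'' direction, assume $\operatorname{IIPM}_{\cF_d}(\underline{P}_n, \underline{P}) \to 0$. Positive homogeneity immediately yields $\circint f\,d\underline{P}_n \to \circint f\,d\underline{P}$ for every bounded Lipschitz $f$. Since every bounded uniformly continuous function is a uniform limit of bounded Lipschitz functions (e.g.\ via the inf-convolutions $f_\lambda = \inf_{y}\{f(y) + \lambda d(\cdot,y)\}$, which are $\lambda$-Lipschitz and converge uniformly to $f$ as $\lambda\to\infty$), $\|\cdot\|_\infty$-nonexpansiveness upgrades this to convergence against all of $C_{b,u}(\cX)$, and a Choquet analogue of the Portmanteau reduction of the test class from $C_b(\cX)$ to $C_{b,u}(\cX)$ — immediate when $\cX$ is compact, as then the two coincide — completes Choquet weak convergence. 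This implication can alternatively be read off from the reasoning behind Theorem~\ref{thm: metrizes}.

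For the ``if'' direction, assume $\circint f\,d\underline{P}_n \to \circint f\,d\underline{P}$ for all $f \in C_b(\cX)$. The class $\cF_d$ is uniformly bounded by $1$ and uniformly $1$-Lipschitz, hence equicontinuous, so by Arzel\`a--Ascoli its closure is compact in $(C(\cX),\|\cdot\|_\infty)$ (when $\cX$ is noncompact, first use a tightness argument to confine almost all the mass of every $\underline{P}_n$ and of $\underline{P}$ to a common compact subset, controlling the tail with the uniform bound on $\cF_d$). Meanwhile $\|\cdot\|_\infty$-nonexpansiveness makes the functionals $\Phi_n\colon f \mapsto \circint f\,d\underline{P}_n - \circint f\,d\underline{P}$ uniformly $2$-Lipschitz on $(C_b(\cX),\|\cdot\|_\infty)$, and $\Phi_n(f)\to 0$ pointwise on $\cF_d$ by hypothesis. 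An equicontinuous family that converges pointwise on a compact set converges uniformly on it, so $\operatorname{IIPM}_{\cF_d}(\underline{P}_n,\underline{P}) = \sup_{f\in\cF_d}|\Phi_n(f)| \to 0$.

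The hard part will be the ``if'' direction, and specifically the two points where the classical apparatus must be rebuilt for imprecise probabilities: (a) extracting the tightness of $\{\underline{P}_n\}$ needed for the Arzel\`a--Ascoli step directly from Choquet weak convergence — there is no off-the-shelf Prokhorov theorem for lower probabilities, so one would either restrict to Polish $\cX$ and argue through the (weak$^\star$-compact) credal sets underlying the $\underline{P}_n$, or prove a dedicated tightness lemma; and (b) justifying that bounded Lipschitz functions already determine Choquet weak convergence, where, unlike in the $2$-monotonic case, Lemma~\ref{lemma: lower_bound_choquet} is unavailable for passing to the credal set since a general element of $\underline{\cP}(\cX)$ need not be $2$-monotonic.
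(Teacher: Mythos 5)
Your ``only if'' direction is essentially the paper's entire proof: the paper shows that bounded Lipschitz functions are $\|\cdot\|_\infty$-dense in $C_b(\cX)$ via the same inf-convolution $f_n(x)=\inf_{y}\{f(y)+nd(x,y)\}$ and then invokes Theorem~\ref{thm: metrizes}. You are in fact more careful than the paper on two points it glosses over: (i) the class $\cF_d$ is the \emph{unit} BL ball, which is not itself dense in $C_b(\cX)$ (every element has sup norm at most $1$), so one genuinely needs the positive homogeneity of the Choquet integral to rescale, exactly as you do; and (ii) the inf-convolutions converge uniformly only when $f$ is uniformly continuous --- the paper's density proof asserts ``there is $\delta>0$ such that $d(x,y)<\delta$ implies $|f(x)-f(y)|<\epsilon$ since $f$ is continuous'', which is uniform continuity and fails for general $f\in C_b(\cX)$ on noncompact $\cX$. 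Your honest flag that the reduction from $C_b(\cX)$ to $C_{b,u}(\cX)$ is only ``immediate when $\cX$ is compact'' identifies a real gap that the paper's own argument shares rather than resolves.

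Your ``if'' direction (Choquet weak convergence implies $\operatorname{IIPM}_{\cF_d}\to 0$) is material the paper does not actually prove: Theorem~\ref{thm: metrizes} establishes only the implication from $\operatorname{IIPM}_\cF\to 0$ to Choquet weak convergence and dismisses ``the other direction'' as a direct application of Theorem~\ref{thm: iipm_metric_for_bounded_continuous_functions}, which at best yields point-separation of the pseudometric, not the convergence statement. Your Arzel\`a--Ascoli argument is the right classical template for this converse, but, as you note, it requires a tightness or compact-confinement step for the family $\{\underline{P}_n\}$ that has no off-the-shelf analogue for lower probabilities; that remains a genuine unproved step in your proposal (and is simply absent from the paper). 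In summary: for the direction the paper proves, you take the same route and patch its rough edges; the remaining gaps you flag are real, but they lie in territory the paper never enters.
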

This result immediately follows from Theorem~\ref{thm: metrizes}, and showcases that the lower Dudley metric can be applied in settings where analysing weak convergence of conservative uncertainty assessment is essential, e.g., in robust statistics and safety-critical applications.




\subsubsection{Lower Total Variation} 
We can define the lower total variation distance as follows,

\begin{defn}[Lower Total Variation Distances]
\label{defn: ltv}
    Let $\underline{P},\underline{Q} \in \underline{\cP}(\cX)$ be two lower probabilities on $(\cX,\Sigma_\cX)$
    and $\cF_{TV} := \{\mathbf{1}_{A}: A \in \Sigma_\cX\}$, then we define $$\operatorname{IIPM}_{\cF_{TV}}(\underline{P},\underline{Q}) = \sup_{A\in\Sigma_{\cX}} |\underline{P}(A) - \underline{Q}(A)|$$ as the {Lower Total Variation Distances}~(LTV) between $\underline{P},\underline{Q}$.
\end{defn}


This equality follows from \citet[Proposition C.5(ii)]{troffaes2014lower}. In fact, the LTV distance was also considered in \citet[Proposition 4.2]{levin2017markov} but was not derived from any integral-based metric framework. It is also important to highlight that many results that hold for IPMs do not carry over to IIPMs due to the loss of additivity. Specifically, while expectations under IPMs are linear, expectations under IIPMs—typically modelled using the Choquet integral—are generally non-linear and, at best, super or sublinear~\citep{peng2013nonlinear}. The following is one such example:
\begin{restatable}[]{remark}{remarkltv}
\label{remark: remark}
    Let $\cX$ be finite. For any $P,Q\in\cP(\cX)$, we have $$\operatorname{IPM}_{\cF_{TV}}(P,Q) = \sup_{A\in\Sigma_\cX}|P(A) - Q(A)| = \frac{1}{2}\sum_{x\in\cX} \left(P(\{x\}) - Q(\{x\})\right).$$ In contrast, in the imprecise case, there exists $\underline{P},\underline{Q}\in\underline{\cP}(\cX)$ such that $$\operatorname{IIPM}_{\cF_{TV}}(\underline{P},\underline{Q}) := \sup_{A\in \Sigma_\cX}|\underline{P}(A) - \underline{Q}(A)| \neq \frac{1}{2}\sum_{x\in\cX} \left(\underline{P}(\{x\}) - \underline{Q}(\{x\})\right).$$    
\end{restatable}
In subsequent applications of IIPM to epistemic UQ, we adopt the LTV distance for its simplicity.


\subsubsection{Lower Kantorovich problem with $\epsilon$-contamination sets} To illustrate the theoretical utility of IIPM, we consider the well-known $\epsilon$-contamination model from robust statistics~\citep{huber2004robust}. We show how IIPM rederives a key result from \citet[Theorem 11]{caprio2024optimal}, namely that the solution to the \emph{restricted lower probability Kantorovich problem}~(RLPK)~\citep[Definition 10]{caprio2024optimal} coincides with the classical Kantorovich problem under $\epsilon$-contamination. For completeness, a detailed treatment of the RLPK problem is provided in Appendix~\ref{appendex_subsec: lpkp}. Recall the $\epsilon$-contamination set $\cP_{\epsilon,P}$ of a probability measure $P\in\cP(\cX)$ is given by,
\begin{align}
\label{eq: epsilon_contamination}
{\small
    \cP_{\epsilon, P} := \{\tilde{P}\in\cP(\cX): \exists R\in\cP(\cX) \text{ s.t. }\tilde{P}(A) = (1-\epsilon)P(A) + \epsilon R(A), \forall A \in \Sigma_\cX\},
}
\end{align}
with $\epsilon\in[0,1]$ the contamination rate of the model. The lower probability corresponding to $\cP_{\epsilon, P}$ is derived in \citet[Section 2.9.2]{walley1991statistical}.
\begin{restatable}[]{lemma}{lemmaepsiloncontamination}
\label{lemma: lower_prob_epsilon}
    Let $\cP_{\epsilon,P}$ be an $\epsilon$-contaminated model defined in \Cref{eq: epsilon_contamination}. Then, the associated lower probability $\underline{P}_\epsilon$ is given by 
    \begin{align*}
    {\small
    \underline{P}_\epsilon(A) = 
        \inf_{\tilde{P}\in \cP_{\epsilon, P}} \tilde{P}(A) = \begin{cases}
            (1-\epsilon) P(A) & \text{for all }A\in\Sigma_\cX \backslash \{\cX\} \\
            1, & \text{for } A = \cX
        \end{cases}
    }
    \end{align*}    
\end{restatable}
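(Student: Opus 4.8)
The plan is to compute $\inf_{\tilde{P} \in \cP_{\epsilon,P}} \tilde{P}(A)$ directly from the parametrisation in \Cref{eq: epsilon_contamination}, treating the cases $A = \cX$ and $A \neq \cX$ separately. The case $A = \cX$ is immediate: every $\tilde{P} \in \cP_{\epsilon,P}$ is a probability measure, so $\tilde{P}(\cX) = 1$ for all of them and the infimum is $1$.

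For $A \in \Sigma_\cX \backslash \{\cX\}$ I would argue by a lower bound together with a matching construction. For the lower bound, fix $\tilde{P} \in \cP_{\epsilon,P}$ and write $\tilde{P}(A) = (1-\epsilon)P(A) + \epsilon R(A)$ for the associated $R \in \cP(\cX)$; since $\epsilon \geq 0$ and $R(A) \geq 0$ we get $\tilde{P}(A) \geq (1-\epsilon)P(A)$, hence $\underline{P}_\epsilon(A) \geq (1-\epsilon)P(A)$. For the reverse direction I would exhibit a contaminating measure with $R(A) = 0$: because $A \neq \cX$, the complement $A^c$ is non-empty, so pick any $x_0 \in A^c$ and take $R = \delta_{x_0}$, the Dirac measure at $x_0$, for which $\delta_{x_0}(A) = \mathbf{1}[x_0 \in A] = 0$. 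Then $\tilde{P} := (1-\epsilon)P + \epsilon \delta_{x_0} \in \cP_{\epsilon,P}$ satisfies $\tilde{P}(A) = (1-\epsilon)P(A)$, so the infimum is attained and equals $(1-\epsilon)P(A)$, as claimed.

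There is no real obstacle here; the only points deserving a word of care are that the infimum is actually a minimum (which the explicit $\delta_{x_0}$ construction delivers, and which also confirms that $\underline{P}_\epsilon$ is genuinely the lower envelope of $\cP_{\epsilon,P}$ in the sense of Definition~\ref{def: lower_probability}) and that $\delta_{x_0}$ is a legitimate element of $\cP(\cX)$ irrespective of whether singletons are measurable — this holds since $B \mapsto \mathbf{1}[x_0 \in B]$ is a countably additive $[0,1]$-valued set function on any $\sigma$-algebra. Should one wish to avoid Dirac measures entirely, any probability measure supported on $A^c$ works just as well in the construction.
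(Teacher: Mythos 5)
Your proof is correct. The paper itself does not give an argument for this lemma: its ``proof'' is a one-line citation to \citet[Section 2.9.2]{walley1991statistical}, so there is nothing to compare step by step. Your self-contained derivation is the standard one and handles the two points that actually need care --- that the case $A=\cX$ must be split off (since one cannot push $R(\cX)$ below $1$), and that for a proper measurable subset $A$ the bound $(1-\epsilon)P(A)$ is attained by contaminating with a measure concentrated on the non-empty complement $A^c$, e.g.\ $\delta_{x_0}$ for $x_0\in A^c$, which is a legitimate probability measure on any $\sigma$-algebra. The only thing your write-up buys beyond the citation is transparency: it makes explicit that the infimum is a minimum, hence that $\underline{P}_\epsilon$ really is the lower envelope of the (weak$^\star$-compact) set $\cP_{\epsilon,P}$ in the sense of Definition~\ref{def: lower_probability}, which is exactly what the later results (Theorem~\ref{thm: michele theorem}, Proposition~\ref{prop: epsilon contamination}) rely on.
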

Now we can recover the results from \citet{caprio2024optimal} using the IIPM framework.
\begin{restatable}[]{theorem}{theoremlowerkantorovich}
\label{thm: michele theorem}
    Let $\cF_W := \{f \in C_b(\cX): \|f\|_{L} \leq 1\}$ where $\|f\|_L := \sup_{x,y\in \cX}\{\nicefrac{|f(x) - f(y)|}{c(x,y)}\}$, and $c$ the transportation cost in a restricted lower probability Kantorovich~(RLPK) problem~\citep[Definition 10]{caprio2024optimal}. Let $\underline{P}_\epsilon$, $\underline{Q}_\epsilon$ be lower probabilities of the $\epsilon$-contaminated models $\cP_{\epsilon, P}$ and $\cP_{\epsilon, Q}$. Then, $\operatorname{IIPM}_{\cF_W}(\underline{P},\underline{Q})$ coincides with the objective of the RLPK problem, and thus coincides with the classical Kantorovich's optimal transport problem involving $P$ and $Q$.    
\end{restatable}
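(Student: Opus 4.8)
The plan is to compute $\operatorname{IIPM}_{\cF_W}(\underline{P}_\epsilon, \underline{Q}_\epsilon)$ directly using the explicit form of the $\epsilon$-contaminated lower probabilities from Lemma~\ref{lemma: lower_prob_epsilon}, and then match this against the RLPK objective. First I would use Lemma~\ref{lemma: lower_bound_choquet}: since the $\epsilon$-contamination lower probability $\underline{P}_\epsilon$ is $2$-monotonic (a standard fact for linear-vacuous mixtures, which I would either cite or verify from the two-case formula), the Choquet integral equals the worst-case expectation, so $\circint f \, d\underline{P}_\epsilon = \inf_{\tilde P \in \cP_{\epsilon,P}} \int f \, d\tilde P$. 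Then, using the mixture structure $\tilde P = (1-\epsilon)P + \epsilon R$ and optimising over $R \in \cP(\cX)$, we get $\circint f \, d\underline{P}_\epsilon = (1-\epsilon)\int f\, dP + \epsilon \underline{f}$, where $\underline f = \inf_{x} f(x)$. (Alternatively one can get this straight from the bounded-function Choquet formula $\circint f\,d\nu = \underline f + \int_{\underline f}^{\overline f} \nu(\{f \ge t\})\,dt$ together with $\underline{P}_\epsilon(\{f \ge t\}) = (1-\epsilon)P(\{f\ge t\})$ for $t > \underline f$.)

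Next I would substitute this into the IIPM definition. The $\epsilon \underline f$ terms for $\underline{P}_\epsilon$ and $\underline{Q}_\epsilon$ are identical and cancel inside the absolute value, giving
\begin{align*}
\operatorname{IIPM}_{\cF_W}(\underline{P}_\epsilon, \underline{Q}_\epsilon)
= \sup_{f \in \cF_W} \left| (1-\epsilon)\int f\, dP - (1-\epsilon)\int f \, dQ \right|
= (1-\epsilon) \sup_{f \in \cF_W} \left| \int f\, dP - \int f\, dQ \right|.
\end{align*}
The remaining supremum is exactly $\operatorname{IPM}_{\cF_W}(P,Q)$, which by the Kantorovich--Rubinstein duality (the function class $\cF_W$ being the $1$-Lipschitz functions with respect to the cost $c$) equals the optimal transport cost between $P$ and $Q$ with cost $c$. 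This identifies $\operatorname{IIPM}_{\cF_W}(\underline{P}_\epsilon,\underline{Q}_\epsilon)$ with $(1-\epsilon)$ times the classical Kantorovich value.

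The final step is to check that the RLPK objective of \citet[Definition 10]{caprio2024optimal}, evaluated at the $\epsilon$-contaminated pair, likewise reduces to $(1-\epsilon)$ times the classical Kantorovich value between $P$ and $Q$ — this is precisely the content of their Theorem~11, so I would invoke it, or unwind their definition and perform the same cancellation of the contaminating mass. Matching the two expressions closes the argument. The main obstacle I anticipate is the first step: justifying $2$-monotonicity of $\underline{P}_\epsilon$ (or otherwise directly evaluating its Choquet integral) and handling the boundary case $A = \cX$ carefully, since the formula for $\underline{P}_\epsilon$ is genuinely non-additive there and one must confirm that the level-set $\{f \ge t\}$ equals $\cX$ only at $t \le \underline f$, so that the anomalous value $\underline{P}_\epsilon(\cX) = 1$ does not disturb the integral $\int_{\underline f}^{\overline f}$. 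A secondary point worth stating explicitly is that $\cF_W$ need not be norm-bounded, but since $P, Q$ are genuine probability measures the relevant Choquet integrals are still finite and the duality applies; alternatively one restricts attention to $f$ with $f(x_0) = 0$ for a fixed basepoint, which does not change the supremum.
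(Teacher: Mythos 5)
Your proposal is correct and follows essentially the same route as the paper: the paper evaluates the Choquet integrals via the bounded-function level-set formula (your parenthetical alternative), observes that $\underline{P}_\epsilon(\{f> t\})=(1-\epsilon)P(\{f> t\})$ on the relevant range so the computation collapses to $(1-\epsilon)\sup_{f\in\cF_W}\left|\int f\,dP-\int f\,dQ\right|$, and then invokes Kantorovich--Rubinstein, exactly as you do. Your extra care about $2$-monotonicity and the boundary case $A=\cX$ is sound but not needed for the paper's direct computation, and your final matching step against the RLPK objective via \citet[Theorem 11]{caprio2024optimal} is the same appeal the paper makes.
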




We include this result to illustrate the utility and flexibility of the IIPM framework for theoretical analysis. 
We also highlight that, in general, a version of the Kantorovich-Rubinstein duality theorem for RLPK does not (yet) exist, therefore, we cannot simply associate $\operatorname{IIPM}_{\cF_W}$ as the \emph{lower Wasserstein-1 distance}. In Appendix \ref{appendex_subsec: kernel_distance}, we continue our analysis on $\epsilon$-contamination models with IIPMs, but using kernels, following the derivation of kernel distances in \citet{gretton2012kernel}. Specifically, we consider $\cF_k := \{f\in\cH_k : \|f\|_{\cH_k} \leq 1\}$, where $k$ is a kernel and $\cH_k$ the corresponding reproducing kernel Hilbert space (RKHS). Non-parametric estimator of $\operatorname{IIPM}_{\cF_k}(\underline{P}_\epsilon, \underline{Q}_\delta)$~(different contamination level is allowed) is also derived. We defer this result due to space constraints.

\section{Measuring epistemic uncertainty with IIPM: \emph{\acrlong{MMI}}}
\label{sec: MCU}

Beyond its appeal for theoretical analyses, the \acrshort{IIPM} framework can find practical application in epistemic \acrshort{UQ} for ML. By quantifying the gap between capacities and their conjugates, IIPM captures the divergence between optimistic and pessimistic assessments---giving rise to a new class of \acrshort{EU} measures, which we call \emph{\acrfull{MMI}}. As in Section~\ref{sec: IIPM}, we illustrate \acrshort{MMI} using lower probabilities, though the framework remains broadly applicable to other IP models.

\begin{defn}[\acrlong{MMI}]
\label{def: MCU}
    Let $\cF\subseteq C_b(\cX)$. The \textbf{\acrlong{MMI}} with respect to $\cF$ is the function $\operatorname{MMI}_{\cF}:\underline{\cP}(\cX) \to \RR$ defined as,
    \begin{align*}
    {\small
        \operatorname{MMI}_{\cF}(\underline{P}) := {\color{black}\operatorname{IIPM}_{\cF}(\overline{P},\underline{P}) =} \sup_{f\in\cF} \left\{\circint f d\overline{P} - \circint f d\underline{P} \right\}.
    }
    \end{align*}
\end{defn}

Note that we have omitted the absolute value sign between the Choquet integrals. This is intentional: since the upper probability $\overline{P}$ setwise-dominates the lower probability $\underline{P}$, the difference is always non-negative, rendering the absolute value sign unnecessary. To explain \acrshort{MMI}'s underlying mechanism, we present an alternative formulation of $\operatorname{\acrshort{MMI}}_\cF$.
\begin{restatable}[]{proposition}{propMCUformulation}
\label{prop: MMI alternative}
    The definition of \acrshort{MMI} is equivalent to
    \begin{align}
    \label{eq: alternative_cbd}
    {\small
        \operatorname{\acrshort{MMI}}_{\cF}(\underline{P}) = \sup_{f\in\cF}\; \int_{\underline{f}}^{\overline{f}} 1 - \Big(\underline{P}(\{f < t\}) + \underline{P}(\{f \geq t\})\Big) dt
    }
    \end{align}
\end{restatable}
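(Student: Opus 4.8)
The plan is to start from the definition $\operatorname{MMI}_\cF(\underline P) = \sup_{f\in\cF}\left\{\circint f\,d\overline P - \circint f\,d\underline P\right\}$ and rewrite each Choquet integral using the bounded-function representation stated in the excerpt, namely $\circint f\,d\nu = \underline f + \int_{\underline f}^{\overline f}\nu(\{f\geq t\})\,dt$ for $f\in C_b(\cX)$. Applying this to both $\overline P$ and $\underline P$, the common term $\underline f$ cancels, leaving
\begin{align*}
\circint f\,d\overline P - \circint f\,d\underline P = \int_{\underline f}^{\overline f}\Big(\overline P(\{f\geq t\}) - \underline P(\{f\geq t\})\Big)\,dt .
\end{align*}
The integrand is pointwise non-negative because $\overline P$ setwise-dominates $\underline P$, which also retroactively justifies dropping the absolute value in the definition of MMI.

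Next I would invoke conjugacy: for every measurable set $A$, $\overline P(A) = 1 - \underline P(A^c)$. Taking $A = \{f\geq t\}$ gives $A^c = \{f < t\}$, so $\overline P(\{f\geq t\}) = 1 - \underline P(\{f < t\})$. Substituting,
\begin{align*}
\overline P(\{f\geq t\}) - \underline P(\{f\geq t\}) = 1 - \Big(\underline P(\{f<t\}) + \underline P(\{f\geq t\})\Big),
\end{align*}
and plugging this back into the integral yields exactly the claimed expression inside the supremum over $f\in\cF$. Since every step is an identity valid for each fixed $f$, taking $\sup_{f\in\cF}$ on both sides completes the argument.

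The only genuine subtlety — and the step I would be most careful about — is the measurability and integrability needed to apply the bounded-function representation of the Choquet integral: one must check that $\{f\geq t\}$ and $\{f<t\}$ lie in $\Sigma_\cX$ (immediate since $f$ is measurable) and that $t\mapsto \nu(\{f\geq t\})$ is a well-defined bounded function on the finite interval $[\underline f,\overline f]$, so the Lebesgue integral makes sense; this is precisely what \citet[Proposition C.3.ii]{troffaes2014lower} guarantees for bounded $f$ and any capacity $\nu$, so both $\overline P$ and $\underline P$ qualify (upper and lower probabilities are capacities). A minor bookkeeping point is that $\underline f$ and $\overline f$ depend on $f$ but are the same for both integrals, so the cancellation is legitimate; and when $\underline f = \overline f$ (i.e. $f$ constant) both sides are trivially zero. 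No interchange of supremum and integral is required, since the rewriting is done for each $f$ before taking the supremum.
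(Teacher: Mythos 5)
Your proof is correct and follows exactly the same route as the paper's: expand both Choquet integrals via the bounded-function representation so the $\underline{f}$ terms cancel, then apply conjugacy $\overline{P}(\{f\geq t\}) = 1 - \underline{P}(\{f<t\})$ to rewrite the integrand. The paper's version is just a terser three-line display of the same identities; your added remarks on measurability and the degenerate constant-$f$ case are harmless extra care.
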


First, notice that for any $t\in[\underline{f}, \overline{f}]$, $\{f < t\} \cup \{f \geq t\} = \cX$. \Cref{eq: alternative_cbd} shows that $\operatorname{\acrshort{MMI}}_\cF$ quantifies EU as the largest possible accumulation of disagreement between $1 = \underline{P}(\cX)$ and the sum of the lower probabilities assigned to the complementary events $\{f < t\}$ and $\{f \geq t\}$ under $\underline{P}$ as we ``slide'' the threshold $t$. As an illustrative example, we now demonstrate how to quantify the EU for the $\epsilon$-contamination model, previously introduced in Section~\ref{sec: IIPM}.

\begin{restatable}[\acrshort{MMI} on $\epsilon$-contamination set.]{proposition}{propmcuepsilon}
\label{prop: epsilon contamination}
    Let $\underline{P}_\epsilon$ be the lower probability associated with $\cP_{\epsilon,P}$ and $\cF\subseteq C_b(\cX)$. Then $$\operatorname{\acrshort{MMI}}_\cF(\underline{P}_\epsilon) = \epsilon \left(\sup_{f\in\cF}\sup_{x,y \in \cX} |f(x) - f(y)|\right).$$ For the LTV distance with $\cF_{TV}:=\{\mathbf{1}_A :  A\in\Sigma_\cX\}$, we have $$\operatorname{\acrshort{MMI}}_{\cF_{TV}}(\underline{P}_\epsilon) = \sup_{A\in\Sigma_\cX} \{\overline{P}_\epsilon(A) - \underline{P}_\epsilon(A) \}= \epsilon.$$
\end{restatable}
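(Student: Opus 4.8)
The plan is to compute the two Choquet integrals $\circint f\,d\overline{P}_\epsilon$ and $\circint f\,d\underline{P}_\epsilon$ explicitly using the closed form from Lemma~\ref{lemma: lower_prob_epsilon}, subtract, and then take the supremum over $\cF$. First I would record the conjugate of $\underline{P}_\epsilon$: by definition $\overline{P}_\epsilon(A) = 1 - \underline{P}_\epsilon(A^c)$, so for all $A \in \Sigma_\cX \setminus \{\emptyset\}$ we get $\overline{P}_\epsilon(A) = 1 - (1-\epsilon)P(A^c) = (1-\epsilon)P(A) + \epsilon$, while $\overline{P}_\epsilon(\emptyset) = 0$. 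Now fix $f \in C_b(\cX)$. Using the bounded-function form of the Choquet integral, $\circint f\,d\nu = \underline{f} + \int_{\underline f}^{\overline f} \nu(\{f \ge t\})\,dt$. For $t \in (\underline f, \overline f]$ the set $\{f \ge t\}$ is nonempty and not all of $\cX$ (for $t$ in the interior; the endpoint $t = \overline f$ and a possible atom at $t=\underline f$ contribute measure zero to the integral and can be ignored), so on this range $\underline{P}_\epsilon(\{f\ge t\}) = (1-\epsilon)P(\{f \ge t\})$ and $\overline{P}_\epsilon(\{f \ge t\}) = (1-\epsilon)P(\{f\ge t\}) + \epsilon$. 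Hence
\begin{align*}
\circint f\,d\overline{P}_\epsilon - \circint f\,d\underline{P}_\epsilon
&= \int_{\underline f}^{\overline f}\Big(\overline{P}_\epsilon(\{f\ge t\}) - \underline{P}_\epsilon(\{f \ge t\})\Big)\,dt
= \int_{\underline f}^{\overline f} \epsilon\,dt
= \epsilon\,(\overline f - \underline f).
\end{align*}
Since $\overline f - \underline f = \sup_{x,y\in\cX}|f(x) - f(y)|$, taking the supremum over $f \in \cF$ gives $\operatorname{MMI}_\cF(\underline{P}_\epsilon) = \epsilon\,\sup_{f\in\cF}\sup_{x,y\in\cX}|f(x)-f(y)|$, which is the first claim.

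For the second claim with $\cF_{TV} = \{\mathbf 1_A : A \in \Sigma_\cX\}$, I would note $\operatorname{MMI}_{\cF_{TV}}(\underline{P}_\epsilon) = \sup_{A}\{\overline{P}_\epsilon(A) - \underline{P}_\epsilon(A)\}$ directly from Definition~\ref{defn: ltv} (or equivalently $\circint \mathbf 1_A\,d\nu = \nu(A)$). From the formulas above, for $A \notin \{\emptyset, \cX\}$ the gap is $\overline{P}_\epsilon(A) - \underline{P}_\epsilon(A) = \big((1-\epsilon)P(A) + \epsilon\big) - (1-\epsilon)P(A) = \epsilon$, while for $A \in \{\emptyset,\cX\}$ the gap is $0$. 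Hence the supremum is exactly $\epsilon$ (attained as soon as there is a measurable set strictly between $\emptyset$ and $\cX$; the degenerate case where $\Sigma_\cX = \{\emptyset,\cX\}$ is uninteresting and can be excluded). Note this also matches the first claim applied to $\cF_{TV}$, since $\sup_{A}\sup_{x,y}|\mathbf 1_A(x) - \mathbf 1_A(y)| = 1$, providing a consistency check.

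The only genuinely delicate point is the treatment of the integration endpoints and any atoms of $f$ under $P$: one must check that the formula $\underline{P}_\epsilon(A) = (1-\epsilon)P(A)$ (rather than the special value $1$) is the relevant one for the sets $\{f \ge t\}$ appearing in the integrand. This is handled by observing that $\{f \ge t\} = \cX$ only for $t \le \underline f$, which is a single point (or a left endpoint) of the integration interval and therefore Lebesgue-null, so it does not affect the value of $\int_{\underline f}^{\overline f}$. Everything else is a routine substitution of the closed-form expressions for $\underline{P}_\epsilon$ and $\overline{P}_\epsilon$ into the Choquet integral formula.
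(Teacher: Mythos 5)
Your proposal is correct and follows essentially the same route as the paper's proof: derive the conjugate $\overline{P}_\epsilon(A) = (1-\epsilon)P(A) + \epsilon$ for nonempty $A$, substitute both capacities into the bounded-function form of the Choquet integral, observe the integrand collapses to the constant $\epsilon$, and conclude with $\epsilon(\overline{f}-\underline{f})$; the $\cF_{TV}$ case then follows by bounding the range of indicator functions by $1$. If anything, you are more careful than the paper about the Lebesgue-null endpoints where $\{f \ge t\}$ equals $\cX$ or $\emptyset$, which the paper glosses over when it silently swaps $\{f\ge t\}$ for $\{f>t\}$.
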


This aligns with the intuition that contaminating a probability assessment by $\epsilon$ amount should proportionally increase the EU by $\epsilon$. The ``unit'' of this uncertainty is determined by the variability of the chosen function class. For TV distance, the unit is precisely $1$. This result is useful later when we derive an upper bound for the $\text{\acrshort{MMI}}_{\cF_{TV}}$ of any lower probability in Proposition~\ref{prop: upperbound}.

\subsection{Desirable axiomatic properties for epistemic UQ measure} While \( \operatorname{\acrshort{MMI}}_\cF \) appears to be an intuitive measure of uncertainty, it is important to examine whether it satisfies key axiomatic properties proposed in the literature on credal UQ~\citep{pal1992uncertainty}, where the EU modelled using IPs is measured. This is especially relevant given the abundance of UQ measures~\citep{hullermeier_aleatoric_2021,kendall2017uncertainties,cuzzolin_uncertainty_2024}, many of which lack rigorous axiomatic support. The axioms we present below were originally formulated for credal sets and are adapted here for lower probabilities, as both representations mostly convey interchangeable semantics. In particular, \citet{abellan2005additivity}, \citet{jirouvsek2018new}, \citet{hullermeier_quantification_2022}, and \citet{sale_is_2023} propose that a valid credal uncertainty measure \( \operatorname{U} : \underline{\cP}(\cX) \to \mathbb{R} \) should satisfy:




\begin{enumerate}[leftmargin=2em]
    \item[\textbf{A1}] \textbf{Non-negativity and boundedness}: (a) $\operatorname{U}(\underline{P}) \geq 0$, for all $\underline{P}\in\underline{\cP}(\cX)$; (b) there exists $u\in\RR$ such that $\operatorname{U}(\underline{P}) \leq u$, for all $\underline{P}\in\underline{\cP}(\cX)$.
    \item[\textbf{A2}] \textbf{Continuity}: $\operatorname{U}$ is a continuous functional.
    \item[\textbf{A3}] \textbf{Monotonicity}: for $\underline{P},\underline{Q} \in \underline{\cP}(\cX)$, if $\underline{P}(A) \leq \underline{Q}(A)$ for all $A\in\Sigma_\cX$, then $\operatorname{U}(\underline{P}) \geq \operatorname{U}(\underline{Q})$.
    \item[\textbf{A4}] \textbf{Probability consistency}: for all $P\in P(\cX)$ we have $\operatorname{U}(P) = 0$.
    \item[\textbf{A5}] \textbf{Sub-additivity}: Let $(\cX_1, \Sigma_{\cX_1})$ and $(\cX_2,\Sigma_{\cX_2})$ be measurable spaces and such that $\cX = \cX_1\times \cX_2$. Let $\underline{P}_{1}(\cdot) = \underline{P}(\cdot, \cX_2)$ and $\underline{P}_2(\cdot) := \underline{P}(\cX_1, \cdot)$ be the corresponding marginal lower probabilities on $\cX_1$ and $\cX_2$ respectively. Then, we have $\operatorname{U}(\underline{P}) \leq \operatorname{U}(\underline{P}_1) + \operatorname{U}(\underline{P}_2)$.
    \item[\textbf{A6}] \textbf{Additivity}: Following from A5, if $\underline{P_1}$ and $\underline{P_2}$ are independent, then $\operatorname{U}(\underline{P}) = \operatorname{U}(\underline{P}_1) + \operatorname{U}(\underline{P}_2)$.
\end{enumerate}
Note Axiom A6 assumes a notion of independence for IPs, for which multiple, non-equivalent definitions exist; see \citet{couso2000survey,cozman2008sets} for a review.
Axioms A1 and A2 set out the basic requirements for $\operatorname{U}$ to be ``sensible''. Axiom A3 states that if a representation $\underline{P}$ is uniformly more conservative than $\underline{Q}$---assigning lower values to all events---then $\underline{P}$ should be deemed more epistemically uncertain. Axiom A4 ensures that precise probabilities correspond to zero EU. Some work~\citep{abellan2006disaggregated} instead prefers the measure to reflect pure aleatoric uncertainty when EU is absent.
Axiom A5 captures the intuition that joint reasoning reduces uncertainty, as dependencies between events provide additional structure. Axiom A6 complements this by stating that, when no such dependencies are present, joint or separate consideration should not affect the uncertainty measure. A5 and A6 are less relevant when quantifying predictive uncertainty in classification or real-valued regression tasks, since the output spaces are generally not product spaces. 

At this point, one may naturally ask whether $\operatorname{\acrshort{MMI}}_\cF$ satisfies these axioms---or under what conditions on $\cF$ this can be ensured. Particular care is required for Axioms A5 and A6, which involve reasoning over product spaces. To meaningfully evaluate these axioms, we must clarify the function classes defined over the marginal domains $\cX_1$ and $\cX_2$. 
Suppose $\cX = \cX_1 \times \cX_2$ as specified in axiom A5. Let $\cF_1 \subseteq C_b(\cX_1)$, $\cF_2 \subseteq C_b(\cX_2)$, and 
$\cF_{12}:=\{f \in C_b(\cX) \; \text{s.t. } f(x_1, x_2) = f_1(x_1) + f_2(x_2), \text{ for some }f_1\in \cF_1, f_2\in \cF_2\}.$

\begin{restatable}[]{theorem}{theoremMCUaxioms}
    \label{thm: mcu_satisfies_axioms}
    For any $\cF\subseteq{C_b(\cX)}$, $\operatorname{\acrshort{MMI}}_\cF$ satisfies axioms \textbf{A1-A4}. If $\underline{P}$ is 2-monotonic and with $\cF=\cF_{12}$ defined above, then $\operatorname{\acrshort{MMI}}_{\cF_{12}}$ satisfies axioms \textbf{A1-A5}. For \textbf{A5}, the subadditivity becomes $$\operatorname{\acrshort{MMI}}_{\cF_{12}} \leq \operatorname{\acrshort{MMI}}_{\cF_{1}} + \operatorname{\acrshort{MMI}}_{\cF_{2}}.$$ If the notion of independence in A6 is taken to be strong independence in the sense of \citet{cozman2008sets}, \textbf{A6} also holds, and $$\operatorname{\acrshort{MMI}}_{\cF_{12}} = \operatorname{\acrshort{MMI}}_{\cF_{1}} + \operatorname{\acrshort{MMI}}_{\cF_{2}}.$$       
\end{restatable}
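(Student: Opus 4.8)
The strategy is to split the six axioms into two groups. Axioms \textbf{A1}--\textbf{A4} are ``elementary'': they follow from the identity $\operatorname{MMI}_\cF(\underline P)=\operatorname{IIPM}_\cF(\overline P,\underline P)$, from Proposition~\ref{prop: pseudometric}, and from the alternative formula of Proposition~\ref{prop: MMI alternative}. For A1, write the per-$f$ gap as $\circint f d\overline P-\circint f d\underline P=\int_{\underline f}^{\overline f}\bigl(\overline P(\{f\ge t\})-\underline P(\{f\ge t\})\bigr)\,dt$: non-negativity (A1a) is clear since $\overline P$ setwise-dominates $\underline P$ (equivalently, $\operatorname{IIPM}_\cF\ge 0$), and boundedness (A1b) holds with $u:=\sup_{f\in\cF}(\overline f-\underline f)$, finite for the (uniformly bounded) function classes in use. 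Probability consistency (A4) is immediate since $P$ is self-conjugate, so $\operatorname{MMI}_\cF(P)=\operatorname{IIPM}_\cF(P,P)=0$ — or, via Proposition~\ref{prop: MMI alternative}, because $P(\{f<t\})+P(\{f\ge t\})=1$ by additivity on complementary events. Monotonicity (A3) is read off Proposition~\ref{prop: MMI alternative}: if $\underline P(A)\le\underline Q(A)$ for all $A$ then $\underline P(\{f<t\})+\underline P(\{f\ge t\})\le\underline Q(\{f<t\})+\underline Q(\{f\ge t\})$, so the $\underline P$-integrand dominates the $\underline Q$-integrand pointwise, which survives integration and $\sup_{f\in\cF}$. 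Continuity (A2) follows from $\operatorname{MMI}_\cF=\operatorname{IIPM}_\cF\circ\Phi$ with $\Phi(\underline P)=(\overline P,\underline P)$: conjugation is continuous (it preserves Choquet weak convergence, as $\circint f d\overline P=-\circint(-f)d\underline P$ with $-f\in C_b(\cX)$) and $\operatorname{IIPM}_\cF$ is $1$-Lipschitz in each argument by the triangle inequality.

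For \textbf{A5}, fix $f\in\cF_{12}$ with $f(x_1,x_2)=f_1(x_1)+f_2(x_2)$ and write $\tilde f_i\in C_b(\cX)$ for the coordinate lift of $f_i$. Two facts combine. First (marginalisation): since $\{\tilde f_1\ge t\}=\{f_1\ge t\}\times\cX_2$ and $\underline P(A\times\cX_2)=\underline P_1(A)$, $\overline P(A\times\cX_2)=\overline P_1(A)$ with $\overline P_i:=(\underline P_i)^\star$, the Choquet formula for bounded functions gives $\circint\tilde f_i d\underline P=\circint f_i d\underline P_i$ and $\circint\tilde f_i d\overline P=\circint f_i d\overline P_i$ (symmetrically for $i=2$). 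Second ($2$-monotonicity): $\circint(\cdot)d\underline P$ is superadditive and, since the conjugate of a $2$-monotone capacity is $2$-alternating, $\circint(\cdot)d\overline P$ is subadditive. Hence $\circint f d\overline P-\circint f d\underline P\le\sum_{i=1,2}\bigl(\circint f_i d\overline P_i-\circint f_i d\underline P_i\bigr)\le\operatorname{MMI}_{\cF_1}(\underline P_1)+\operatorname{MMI}_{\cF_2}(\underline P_2)$, and taking $\sup_{f\in\cF_{12}}$ yields A5 (one also checks en route that the marginal $\underline P_i$ of a $2$-monotone $\underline P$ is again a $2$-monotone lower probability, so the right-hand side is well defined).

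For \textbf{A6}, I would promote the two one-sided inequalities above to equalities. By Lemma~\ref{lemma: lower_bound_choquet}, $2$-monotonicity gives $\circint g d\underline P=\min_{R\in\cC}\int g\,dR$ and, dually, $\circint g d\overline P=\max_{R\in\cC}\int g\,dR$ over the credal set $\cC$ of $\underline P$. Under strong independence in the sense of \citet{cozman2008sets}, $\cC$ is the closed convex hull of $\{R_1\otimes R_2:R_1\in\cC_1,R_2\in\cC_2\}$ and $\underline P_i$ is the lower envelope of $\cC_i$. Because $R\mapsto\int(f_1+f_2)\,dR$ is affine, continuous, and separable on product measures, its extrema over $\cC$ are attained on $\{R_1\otimes R_2\}$ and decompose: $\min_{R\in\cC}\int(f_1+f_2)\,dR=\circint f_1 d\underline P_1+\circint f_2 d\underline P_2$, and likewise for the maxima. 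So $\circint f d\overline P-\circint f d\underline P=\sum_{i=1,2}\bigl(\circint f_i d\overline P_i-\circint f_i d\underline P_i\bigr)$ exactly; and since the supremum over $\cF_1\times\cF_2$ of a sum of a term depending only on $f_1$ and one depending only on $f_2$ is the sum of the separate suprema, $\operatorname{MMI}_{\cF_{12}}(\underline P)=\operatorname{MMI}_{\cF_1}(\underline P_1)+\operatorname{MMI}_{\cF_2}(\underline P_2)$.

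The main obstacle I anticipate is A5--A6: getting the direction of super-/sub-additivity of the Choquet integral right (this is exactly why $2$-monotonicity is assumed), pushing the marginalisation identity through cleanly, and — for A6 — invoking the strong-product form of the credal set correctly and verifying that minimising and maximising a separable affine functional over it decomposes coordinatewise. The A1--A4 arguments are essentially bookkeeping once the two representations $\operatorname{MMI}_\cF=\operatorname{IIPM}_\cF(\overline{\cdot},\underline{\cdot})$ and Proposition~\ref{prop: MMI alternative} are in hand.
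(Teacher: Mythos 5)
Your proposal is correct and follows essentially the same route as the paper's proof: A1--A4 via the integrand representation of Proposition~\ref{prop: MMI alternative} together with the conjugation identity $\circint f\, d\overline{P} = -\circint(-f)\, d\underline{P}$ and the triangle inequality for A2; A5 via super-additivity of the Choquet integral for $2$-monotone capacities, sub-additivity for the $2$-alternating conjugate, the marginalisation identity, and preservation of $2$-monotonicity under marginalisation; and A6 via the lower/upper-envelope representation over the strongly independent product credal set. No substantive differences to report.
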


Theorem~\ref{thm: mcu_satisfies_axioms} establishes $\operatorname{\acrshort{MMI}}_\cF$ as a principled measure of EU for lower probabilities. Axioms A1–A4 hold for any function class $\cF \subseteq C_b(\cX)$, while A5 and A6 require additional 2-monotonicity—satisfied, for instance, by belief functions and suitably chosen product function spaces in multivariate settings. Examples of credal EU measures that satisfy some of the axioms include: the difference between maximal and minimal entropy within a credal set~\citep{abellan2006disaggregated}, which fails A3; the generalised Hartley~(GH) measure~\citep{abellan2005additivity}, which satisfies all 6 axioms; and the volume of the credal set, which fails A6. Nonetheless, \citet{sale_is_2023} demonstrated that credal set volume fails to be a good EU measure in $K$-class classification problems when $K>2$ due to the lack of robustness in computation as the dimension $K$ increases. We, therefore, exclude it from our experimental comparisons. We also note that, when $|\cX|=2$ as in binary classification, $\operatorname{\acrshort{MMI}}_{\cF_{TV}}$ corresponds to the \emph{credal epistemic measure}, $$\overline{P}(\{x\}) - \underline{P}(\{x\})$$ for $x\in\cX$, considered in \citet{hullermeier2022quantification}.


\paragraph{A linear-time upper bound.} Computing $$\operatorname{\acrshort{MMI}_{\cF_{TV}}}(\underline{P}) = \sup_{A\subseteq \cX}\{\overline{P}(A) - \underline{P}(A)\}$$ in $K$-class classification is computationally expensive, as it requires evaluation over $2^K$ subsets---akin to computing the generalised Hartley measure. To address this bottleneck, Proposition~\ref{prop: upperbound} introduces a practical linear-time upper bound. The key idea is to approximate any $\underline{P}$ with the closest and uniformly more conservative $\epsilon$ contamination model $\underline{P}_\epsilon$, i.e. $\underline{P}_\epsilon(A) \leq \underline{P}(A)$ for all events $A$, ensuring no additional certainty is introduced for this approximation. Leveraging Axiom A3 in Theorem~\ref{thm: mcu_satisfies_axioms} and Proposition~\ref{prop: epsilon contamination}, we derive a computationally efficient upper bound for $\operatorname{MMI}_{\cF_{TV}}(\underline{P})$, relaxing computation to $O(K)$.
\begin{restatable}[]{proposition}{mmilinear}
\label{prop: upperbound}
Let $\cX$ be finite. For any $\underline{P}\in\underline{\cP}(\cX)$,  $$\operatorname{\acrshort{MMI}}_{\cF_{TV}}(\underline{P}) \leq 1 - \sum_{x\in\cX}\underline{P}(\{x\}).$$    
\end{restatable}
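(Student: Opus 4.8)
The plan is to sandwich $\underline{P}$ between itself and a uniformly more conservative $\epsilon$-contamination model, and then combine the monotonicity axiom \textbf{A3} from Theorem~\ref{thm: mcu_satisfies_axioms} with the closed form of $\operatorname{MMI}_{\cF_{TV}}$ on $\epsilon$-contamination sets (Proposition~\ref{prop: epsilon contamination}). Write $s := \sum_{x\in\cX}\underline{P}(\{x\})$. Since $\underline{P}$ is the lower envelope of a credal set (Definition~\ref{def: lower_probability}), it is superadditive on disjoint events, so partitioning $\cX$ into singletons gives $s \leq \underline{P}(\cX) = 1$; hence $s\in[0,1]$.

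First I would dispose of the degenerate case $s = 0$, where the claimed bound reads $\operatorname{MMI}_{\cF_{TV}}(\underline{P}) \leq 1$: this holds because, by Definition~\ref{def: MCU} with $\cF=\cF_{TV}$, $\operatorname{MMI}_{\cF_{TV}}(\underline{P}) = \sup_{A\in\Sigma_\cX}\{\overline{P}(A) - \underline{P}(A)\} \leq 1 - 0 = 1$. Assume henceforth $s > 0$. Define a probability measure $P$ on the finite space $\cX$ by $P(\{x\}) := \underline{P}(\{x\})/s$, and set $\epsilon := 1 - s \in [0,1)$. Let $\underline{P}_\epsilon \in \underline{\cP}(\cX)$ denote the lower probability of the $\epsilon$-contamination set $\cP_{\epsilon,P}$; by Lemma~\ref{lemma: lower_prob_epsilon}, for every proper subset $A \subsetneq \cX$ we have $\underline{P}_\epsilon(A) = (1-\epsilon)P(A) = \sum_{x\in A}\underline{P}(\{x\})$, while $\underline{P}_\epsilon(\cX) = 1$.

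The key step is to verify that $\underline{P}_\epsilon$ is setwise dominated by $\underline{P}$, i.e.\ $\underline{P}_\epsilon(A) \leq \underline{P}(A)$ for all $A\in\Sigma_\cX$. For $A = \cX$ both sides equal $1$. For $A \subsetneq \cX$, applying superadditivity of $\underline{P}$ across the partition of $A$ into its singletons yields $\sum_{x\in A}\underline{P}(\{x\}) \leq \underline{P}(A)$, which is exactly $\underline{P}_\epsilon(A) \leq \underline{P}(A)$. Since $\operatorname{MMI}_{\cF_{TV}}$ satisfies \textbf{A3} (monotonicity) by Theorem~\ref{thm: mcu_satisfies_axioms}, setwise domination of $\underline P_\epsilon$ below $\underline P$ gives $\operatorname{MMI}_{\cF_{TV}}(\underline{P}) \leq \operatorname{MMI}_{\cF_{TV}}(\underline{P}_\epsilon)$. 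Finally, Proposition~\ref{prop: epsilon contamination} gives $\operatorname{MMI}_{\cF_{TV}}(\underline{P}_\epsilon) = \epsilon = 1 - s = 1 - \sum_{x\in\cX}\underline{P}(\{x\})$, which is the desired bound.

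The only non-routine point, hence the main obstacle, is the verification that the constructed $\epsilon$-contamination model is dominated by $\underline{P}$ at \emph{every} event, not merely at the singletons; this is precisely where superadditivity of coherent lower probabilities is used. One should also keep an eye on the boundary cases: when $s = 1$, $\underline{P}$ is already a precise probability, $\epsilon = 0$, and the bound collapses to $\operatorname{MMI}_{\cF_{TV}}(\underline{P}) = 0$, consistent with \textbf{A4}; the case $s = 0$ is handled separately as above.
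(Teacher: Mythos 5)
Your proposal is correct and follows essentially the same route as the paper: approximate $\underline{P}$ from below by the $\epsilon$-contamination model with $\epsilon = 1 - \sum_{x\in\cX}\underline{P}(\{x\})$ and centre $P(\{x\}) = \underline{P}(\{x\})/\sum_{x}\underline{P}(\{x\})$, then combine monotonicity (Axiom \textbf{A3} of Theorem~\ref{thm: mcu_satisfies_axioms}) with Proposition~\ref{prop: epsilon contamination}. The only difference is that where the paper cites the undominated-outer-approximation result of \citet[Proposition 8]{montes20182} to obtain $\underline{P}_\epsilon(A) \leq \underline{P}(A)$ for all $A$, you verify this domination directly from superadditivity of lower probabilities, which makes the argument self-contained (and your separate treatment of the degenerate case $s=0$ is a sensible extra precaution).
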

\section{Related work}
\label{sec: related work}

\paragraph{Distances and Divergences for IPs.} 
Several metrics and divergences have been proposed for credal sets~\citep{veazie_when_2006,destercke_handling_2012,bronevich_characteristics_2017,caprio_dynamic_2023,ergo.th}, and were recently categorised by \citet{chau2024credal} into inclusion, equality, and intersection measures. Given its connection with lower probabilities, our IIPM belongs to the equality class. For fuzzy sets and fuzzy measures, see~\citet{couso2013similarity,montes2018divergence}. Specifically for lower probabilities, \citet{hable2010minimum} studied minimum distance estimation between a precise parametric model and a lower probability model, using a special case of \( \operatorname{IIPM}_\cF \) with bounded continuous functions over a discretised domain—though without theoretical analysis. In the context of belief functions, divergences such as those in~\citep{perry1991belief,blackman1999design,jousselme2001new} operate on the möbius transform of belief functions, while the Minkowski measure~\citep{klir1999uncertainty} compares belief functions via summation of all assessments. More recently, \citet{xiao2025generalized} introduced a generalised $f$-divergence for belief functions. Closest in spirit to our work is \citet{catalano2024hierarchical}, who proposed an integral-based metric for higher-order probabilities.

\paragraph{Uncertainty Quantification for IPs.}
To quantify epistemic, and occasionally aleatoric, uncertainty in credal-set-based models, several measures have been proposed, including maximal entropy~\citep{abellan2003maximum}, entropy difference~\citep{hullermeier_quantification_2022}, the generalised Hartley measure~\citep{abellan2005additivity}, and credal set volume~\citep{sale2023volume}. In hierarchical or second-order models such as Bayesian and evidential frameworks, entropy-based measures—including entropy, conditional entropy, and mutual information—have been widely used to capture total, epistemic, and aleatoric uncertainty~\citep{gal2016uncertainty,kendall2017uncertainties}; see \citet{pmlr-v216-wimmer23a,smith2024rethinking} for critiques. Alternatively, \citet{sale2023second,sale_second-order_2023} advocate variance and probability distances as general-purpose EU measures for second-order models. A recent review is provided by \citet{hoarau2025measures}.

Overall, to our knowledge, our work is the first to introduce a Choquet-integral-based metric framework for comparing capacities---a foundational class for many uncertainty models---marking a novel use of integral-based distances for quantifying credal uncertainty.
\section{Empirical validation of \acrshort{MMI} with selective prediction experiments}
\label{sec: experiment}

This section demonstrates the practicality of the proposed \acrshort{MMI} measure. Additional ablation studies and detailed experiment descriptions are provided in Appendix~\ref{appendix: more experiments}. The code to reproduce our experiments is here~\citep{iipm2025code}. We evaluate \acrshort{MMI} by its ability to capture informative EU in $K$-class classification. 
As ground-truth uncertainty is unavailable, the informativeness of epistemic UQ is typically assessed by its utility in improving prediction or decision-making~\citep{chau2021bayesimp,chau2021deconditional}. Following \citet{shaker2021ensemble,hullermeier_quantification_2022} while extending their binary classification experiments to the multiclass case, we evaluate our performance by plotting the \emph{accuracy-rejection}~(AR) curves in selective classification problems. An AR curve plots accuracy against rejection rate: a model that abstains on $p\%$ of inputs and predicts on the most certain $(1-p)\%$ should show increasing accuracy with $p$. An informative \acrshort{EU} measure yields a monotonic AR curve, approaching the top-left corner—analogous to ROC curves in standard classification.

\paragraph{Setup.} As we are working with discrete label spaces, we consider $\cF_{TV}$ as our function class for \acrshort{MMI}. Its linear-time upper bound is denoted as $\texttt{MMI-Lin}$. We compare them against two popular epistemic UQ measures: the entropy difference ($\texttt{E-Diff}$)~\citep{abellan2006disaggregated} and the generalised Hartley measure ($\texttt{GH}$)~\citep{abellan_measures_2006}. To construct the credal set and corresponding lower probabilities, we utilise $m=10$ probabilistic predictors trained with different hyperparameter configurations---random forests for tabular data and neural networks for image data---and make predictions using the centroid of the credal set, equivalent to averaging over all predictors. Specifically, for an instance $x$ and predictors $\{\hat{P}_{j}(Y\mid X=x)\}_{j=1}^{m}$, the lower probability is constructed as $$\underline{P}(A) = \min_{j\in\{1,\dots,m\}} \hat{P}_j(Y\in A \mid X= x)$$ for every possible subset of classes, and predictions are made using $\frac{1}{m}\sum_{j=1}^m \hat{P}_j(Y\mid X=x)$. While more advanced aggregation methods exist~\citep{singh2024domain,wang2024credal}, our aim is to assess UQ measures, not optimise prediction. The lower probability here encodes the EU information we wish to recover using MMI. The lower probability captures the epistemic uncertainty that \texttt{MMI} seeks to quantify. We evaluate on two UCI datasets~\citep{uci_obesity_2019,alpaydin1998optical} and CIFAR-10/100~\citep{krizhevsky2009learning}. We perform train-test splits on the datasets and produce AR curves on the latter. All experiments are repeated 10 times, and the mean and standard deviation of the accuracies are plotted. The full experimental set-up is discussed in Appendix~\ref{appendix: more experiments}. 

\paragraph{Results.} Figure~\ref{fig: ARC} presents AR curves for our classification problems with 7, 10, and 100 classes. The plots share the same message: \texttt{MMI} and \texttt{MMI-Lin} consistently outperform \texttt{E-Diff} and perform almost identical to \texttt{GH}, which is often considered the most informative measure for epistemic UQ~\citep{hoarau2025measures}. This equivalence in performance may be because both measures satisfy all 6 desirable axioms for epistemic UQ measures. Notably, \citet{hoarau2025measures} showed that \texttt{GH} and the credal epistemic measure (i.e., \texttt{MMI} when $K=2$) exhibit perfect correlation, suggesting they encode identical information. Our results extend this insight, indicating that this equivalence may hold for arbitrary $K$. 
For large $K$, where exact computation of \texttt{MMI} and \texttt{GH} becomes intractable ($O(2^K)$), the upper bound \texttt{MMI-Lin} remains efficient ($O(K)$) and continues to outperform \texttt{E-Diff}, offering a practical and reliable alternative to \texttt{GH}. These results clearly demonstrate that our proposed  IIPM framework and its application as an epistemic UQ measure are not only theoretically justified, but also practical.

\begin{figure}
    \centering
    \includegraphics[width=\linewidth]{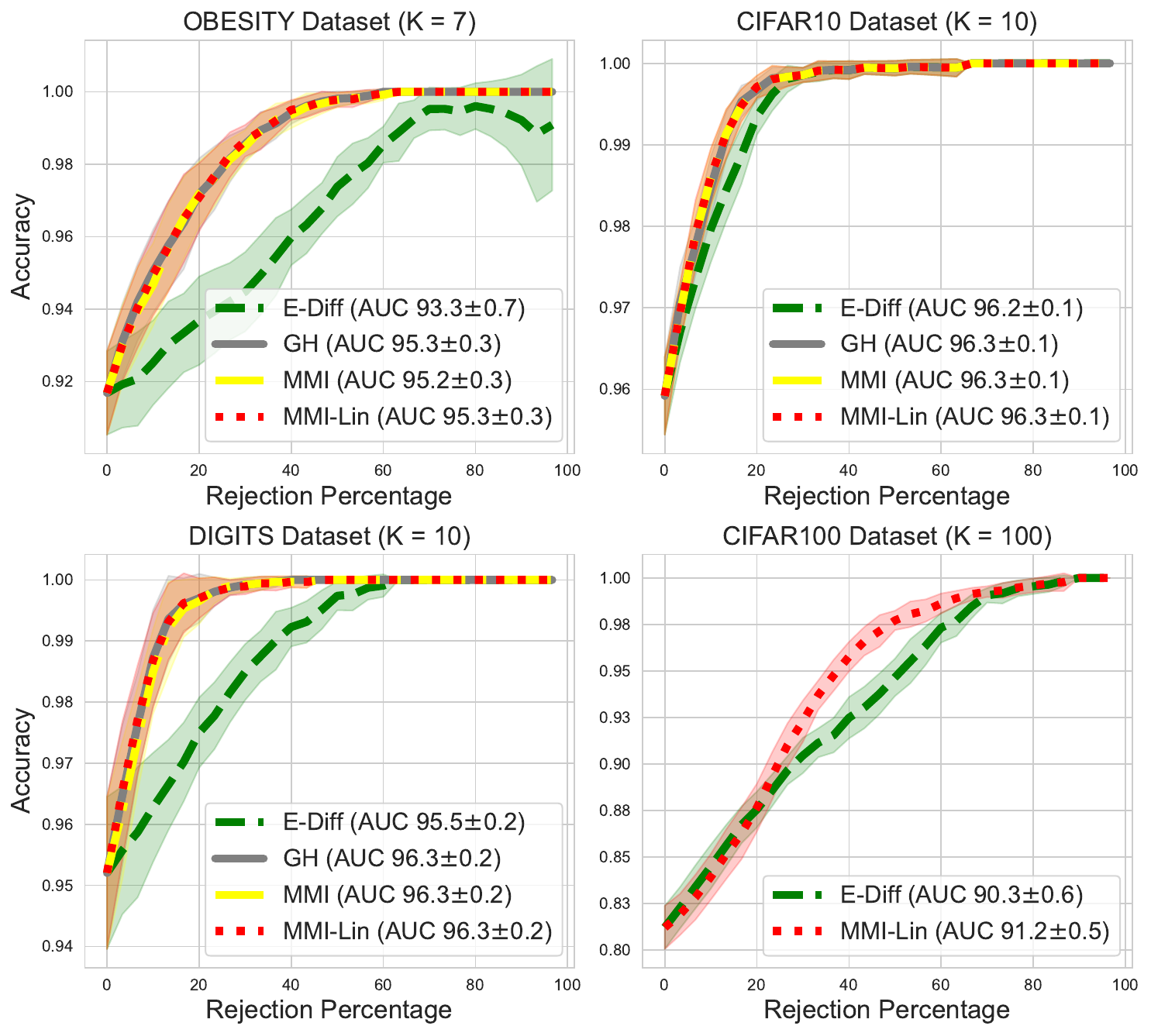}
    \caption{Accuracy-Rejection (AR) curves on four classification tasks. The area under the curve (AUC) is reported for numerical comparison. We consistently outperform entropy difference (\texttt{E-Diff}) and match the performance of Generalised Hartley (\texttt{GH}). On large-scale problems, our efficient upper bound (\texttt{MMI-Lin}) remains tractable and continues to outperform \texttt{E-Diff}.}
    \label{fig: ARC}
\end{figure}
\section{Discussion, limitations, and future directions}
\label{sec: discussion}

We introduced Integral Imprecise Probability Metrics (IIPMs) as a generalisation of classical Integral Probability Metrics (IPMs) to the setting of imprecise probabilities. This framework enables the comparison of IP models through Choquet integration. We provided initial theoretical analyses and illustrated their practicality through epistemic UQ. Notably, the framework gives rise to a new class of epistemic UQ measures---termed \acrlong{MMI}s---which outperforms several popular alternatives in empirical evaluations on selective classification problems.

While this work opens new avenues for IPML research, a key limitation is the lack of a sampling-based estimation method for IIPMs, unlike in classical IPMs. This arises from the absence of a canonical sampling procedure for imprecise models, which capture subjective beliefs rather than event frequencies. Nonetheless, advancing this direction could, for example, enable the integration of EU into generative models based on IPMs~\citep{li2017mmd,arbel2019maximum}. On the theoretical side, it would be valuable to connect and extend other divergence classes, such as $f$-divergences and Bregman divergences, to the context of capacities and more general IPs. Another open question is whether weaker conditions on the function class $\mathcal{F}$ or space $\mathcal{X}$ can ensure that $\operatorname{IIPM}_\mathcal{F}$ metrises capacities. From a computational perspective, developing efficient nonparametric estimation methods for IIPMs is a key priority. A further application-oriented direction is to quantify credal epistemic uncertainty in regression problems using the IIPM framework, particularly in light of recent proposals for uncertainty quantification axioms in regression~\citep{bulte2025axiomatic}.

\paragraph{Acknowledgement.} The authors would like to thank Anurag Singh, Shireen Kudukkil Manchingal, and Fabio Cuzzolin for insightful discussions. Special thanks to Masha Naslidnyk for her detailed proofreading and thoughtful feedback. The authors also gratefully acknowledge the University of Manchester and CISPA Helmholtz Center for Information Security for supporting Siu Lun's research visit to Manchester, without which this project would not have been possible.

\resumetoc


\newpage
\appendix
\tableofcontents
\newpage

\section{Some more introduction to \acrfull{IPML}}
\label{appendix: IPML}

In the main text, we demonstrate the theoretical appeal and practical utility of IIPM primarily through capacities and lower probabilities. While these models are already quite general, we complement this focus by discussing two other mainstream approaches in IPML—credal sets and belief functions—that, although mathematically related, are conceptually motivated in distinct ways. As illustrated in the hierarchy in Figure~\ref{fig: skeleton}, these models are closely connected to the core ideas of the paper and further support the relevance and broad applicability of the proposed IIPM and MMI framework.

\begin{figure}
    \centering
    \begin{tikzpicture}[>=stealth, thick]

    \node (A) at (2,-1) [draw, process, text width=2cm, minimum height=0.5cm, align=flush center, fill=pink!30]
    {credal sets};

    \node (B) at (-2,-1) [draw, process, text width=2cm, minimum height=0.5cm, align=flush center, fill=orange!50]
    {capacities};


    \node (D) at (0,-2) [draw, process, text width=4cm, minimum height=0.5cm, align=flush center, fill=orange!50]
    {lower/upper probabilities};

    \node (E) at (0,-3) [draw, process, text width=4cm, minimum height=0.5cm, align=flush center, fill=orange!30]
    {2-monotone capacities};

    \node (E2) at (5,-3) [draw, process, text width=2cm, minimum height=0.5cm, align=flush center, fill=gray!10]
    {clouds};

    \node (F) at (0,-4) [draw, process, text width=4cm, minimum height=0.5cm, align=flush center, fill=pink!30]
    {random sets/belief function};

    \node (F2) at (-5,-4) [draw, process, text width=3cm, minimum height=0.5cm, align=flush center, fill=gray!10]
    {probability intervals};

    \node (G) at (0,-5) [draw, process, text width=4cm, minimum height=0.5cm, align=flush center, fill=gray!10]
    {comonotonic clouds};

    \node (H) at (0,-6) [draw, process, text width=2cm, minimum height=0.5cm, align=flush center, fill=gray!10]
    {p-boxes};

    \node (H2) at (5,-6) [draw, process, text width=2cm, minimum height=0.5cm, align=flush center, fill=gray!10]
    {possibilities};

    \node (I) at (0,-7) [draw, process, text width=4cm, minimum height=0.5cm, align=flush center, fill=orange!50]
    {probabilities};
    
    \draw[dashed] (A) -- (B);
    \draw[->] (A) -- (D);
    \draw[->] (B) -- (D);
    \draw[->] (D) -- (E);
    \draw[->] (E) -- (F2);
    \draw[->] (F2) -- (I);
    \draw[->] (D) -- (E2);
    \draw[->] (E2) -- (G);
    \draw[->] (E) -- (F);
    \draw[->] (F) -- (G);
    \draw[->] (G) -- (H);
    \draw[->] (G) -- (H2);
    \draw[->] (H) -- (I);

    \end{tikzpicture}
    \caption{A skeleton demonstrating the connection between various uncertainty calculi. ``A $\to$ B'' means A generalises B, meaning that B is a specific instance of A. The figure is adopted from \citet{destercke2008unifying} and \citet{hullermeier_aleatoric_2021}. Most of these frameworks generalise classical probability theory. In the main text, we have discussed capacities, lower and upper probabilities, and standard probabilities in detail, with a brief mention of 2-monotone capacities. Additional discussion of credal sets and belief functions is provided in Appendix A. We do not elaborate on the methods shown in grey; for those, we refer readers to existing surveys and review articles.}
    \label{fig: skeleton}
\end{figure}

\subsection{What is \acrlong{IPML} actually doing?}
At its core, probabilistic machine learning seeks to construct mathematical models that, through data-driven learning procedures, capture underlying physical or real-world phenomena. For instance, in generative modelling, the objective is to learn the marginal probability distribution that governs the data-generating process. In predictive tasks, instead, the goal is to estimate the conditional distribution of a target variable $Y$ given an input $x$—or its expectation in the case of regression. We often refer to these kinds of natural variation and randomness as aleatoric uncertainty.

Imprecise probabilistic machine learning (IPML) extends this foundation by moving beyond the exclusive use of precise probability models. While classical probability excels at modelling aleatoric uncertainty, IPML incorporates imprecise probability models to account not only for inherent randomness but also for epistemic uncertainty---allowing ambiguity, partial knowledge, and doubt to be explicitly represented within the model. For readers interested in deeper treatments on IP, we recommend \citet{cuzzolin2020geometry} for a comprehensive introduction, \citet[Appendix A]{hullermeier_aleatoric_2021}, for a concise overview, and \citet[Appendix A]{caprio2023credal}, for a discussion on why we should care about imprecision. 


In the following, we provide an overview of two other mainstream modelling approaches in IPML: credal set and belief function approaches. We outline the motivations behind these methods and provide some examples of how they are integrated into ML to improve uncertainty quantification and predictive performance. 



\subsection{Credal sets and their use in IPML}

Credal sets sit at the top of the hierarchy shown in Figure~\ref{fig: skeleton}, making them one of the most general constructs in imprecise probability. Many other models can be seen as special cases of credal sets endowed with additional structure. Consequently, there are numerous ways to construct a credal set, depending on the modelling assumptions and information available.

Credal sets are generally understood as some convex set of probability measures $\cC \subset\cP(\cX)$. Convexity can be justified in different ways. In Quasi-Bayesian decision theory~\citep{giron_quasi-bayesian_1980}, it can be shown that rationality axioms proposed for \textbf{partial} binary preference naturally leads to a convex set of probability measures, akin to how Savage~\citep{savage1954foundations} showed rational binary preference leads to the existence of a unique single probability distribution (paired along with an utility function). Alternatively, in formal epistemology, \citet{williamson_defence_2010} put forward the notion of \emph{Chance Calibration}, which is also closely related to \citet{lewis1980subjectivist}'s \emph{Principal Principle}, which puts into the words of a statistician means, when, according to the current observations, the actual distribution of interest encoding physical phenomena lies within some set $\{P_1,\dots, P_m\}$ of distributions, but the modeller is indifferent as to which distribution, then rationally, the modeller's belief, also represented as a distribution, should lie within 
$\operatorname{ConvexHull}(\{P_1,\dots, P_m\})$. This means any distribution in the convex hull is considered a rational belief, thus, the set itself captures the set of rational beliefs, encompassing our imprecision.

In IPML, credal sets are often used to represent either 
\begin{enumerate}
    \item Dataset/Distribution-level uncertainty, or 
    \item Predictive uncertainty.
\end{enumerate}

\textbf{Case 1.} In the first case, examples include distributionally robust optimisation\citep{rahimian_distributionally_2022}, where a learning algorithm is optimised against the worst-case empirical risk over a credal set—a set of distributions within an $\epsilon$-distance from the observed empirical distribution. In the out-of-domain generalisation literature, given observations from multiple source distributions $\mathbb{P}_1, \dots, \mathbb{P}_m$, it is commonly assumed that the test-time distribution lies within their convex hull \citep{mansour_multiple_2012, foll2023gated}, which effectively forms a credal set. \citet{singh2024domain} made this connection explicit and proposed an algorithm that allows the test-time ambiguity to be resolved without additional training. \citet{caprio_credal_2024} developed a learning-theoretic framework for supervised learning under credal sets, while \citet{chau2025credal} introduced a hypothesis testing procedure for statistically comparing credal sets.

\textbf{Case 2.} In the second case, credal sets are used to model epistemic uncertainty in prediction. In credal Bayesian deep learning (CBDL)~\citep{caprio2023credal}, finitely generated credal sets over priors and likelihoods are combined, by considering all combinatorial applications of Bayes’ rule, to yield a posterior credal set. \citet{wang2025creinns} introduced credal-set interval neural networks, which predict credal sets from probability intervals derived from deterministic interval neural network outputs. Similarly, for classification tasks, \citet{wang2025credal} proposed defining a predictive credal set as the collection of probability vectors within the simplex that satisfy lower and upper bounds on class probabilities, derived from a set of probabilistic predictors. 

\subsection{Belief function/random set and their use in IPML}

Moving down the hierarchy---and skipping lower probabilities and 2-monotone capacities, which are already discussed in the main paper---we briefly describe the roles of random set theory and belief functions.

We focus on finite instance spaces $\mathcal{X}$, where random set theory and belief functions coincide. Our exposition follows the terminology of Shafer’s seminal work on The Theory of Evidence~\citep{shafer1976mathematical} and the overview in \citet[Chapter 2.2]{cuzzolin_uncertainty_2021}. The core philosophy behind belief function theory is its emphasis on representing degrees of support---capturing epistemic uncertainty---rather than specifying how these values are generated, which relates more to aleatoric uncertainty. Perhaps more importantly are the tools developed to combine multiple evidence in a coherent manner. 

So, how are belief functions defined? We first introduce a fundamental concept, called basic probability assignments. Let $\mathcal{X}$ be finite.
\begin{defn}
    A basic probability assignment over $\mathcal{X}$ is a set function $m:2^\mathcal{X} \to [0,1]$ such that
    \begin{align*}
        m(\emptyset) = 0, \quad \sum_{A\subseteq \mathcal{X}} {m(A)} = 1.
    \end{align*}
\end{defn}

The subsets that have non-zero mass are known as the focal elements within $2^\mathcal{X}$. Basic probability assignment happens in practice when, e.g., sensors have limited precision and can only give results of the type ``A or B''~\citep{ghafir2018basic}.


Given a mass function $m$, which intuitively represents the degree of belief that the true outcome lies exactly within the subset $A \in 2^{\mathcal{X}}$, $m(A)$ quantifies the support assigned to the set $A$ and no more specific subset. From here, we can derive the belief function.
\begin{defn}
    The belief function associated with a basic probability assignment $m:2^\mathcal{X} \to [0,1]$ is the set function $\operatorname{Bel}:2^\mathcal{X}\to[0,1]$ defined as,
    \begin{align*}
        \operatorname{Bel}(A) = \sum_{B\subseteq A} m(B).
    \end{align*}
\end{defn}
Its conjugate, known as the plausibility function $\operatorname{Pl}$, measures the amount of evidence \emph{not against an event} $A$ by measuring the following,
\begin{align*}
    \operatorname{Pl}(A) = 1 - \operatorname{Bel}(A^c).
\end{align*}

The theory of evidence is rich and conceptually deep, and a full treatment lies beyond the scope of this appendix. However, the context provided should suffice to understand a recent integration of belief functions into machine learning. Specifically, \citet{manchingalrandom2025} introduced a new class of neural networks, called Random-Set Neural Networks (RS-NNs), for $K$-class classification. Instead of producing a probability vector with $K$ outputs---as in standard classifiers---RS-NNs are designed to output a basic probability assignment over the $K$ classes, requiring $2^K$ output nodes to represent belief mass on all subsets of classes. Since this is computationally infeasible for large $K$, a preprocessing step is introduced to select a subset of focal elements, including the original singleton classes and additional relevant subsets. Given the resulting mass function, belief and plausibility functions can then be derived for prediction and uncertainty quantification. This principle is further extended to convolutional neural networks in \citet{manchingal2023random}. As perhaps the first of its kind, this random set-based learning paradigm has also been recently adapted to large language models in \citet{mubashar2025random}, offering an explicit mechanism for modelling epistemic uncertainty in LLMs.

\section{Proofs and derivations}
\label{appendix: proofs}

This section presents the proofs and derivation in the main text.

\subsection{Proof of Lemma~\ref{lemma: lower_bound_choquet}}
\lemmachoquetbound*
\begin{proof}
    Let $\underline{P}$ be the lower probability associated to the credal set $\cC$. For $f\in C_b(\cX)$, we can write the Choquet integral as,
    \begin{align*}
        \circint f d\underline{P}
            &= \underline{f} + \int_{\underline{f}}^{\overline{f}} \underline{P}(\{f \geq t\}) dt\\
            &= \underline{f} + \int_{\underline{f}}^{\overline{f}} \inf_{P\in\cC} {P}(\{f \geq t\}) dt\\
            &\leq \underline{f} + \inf_{P\in \cC}  \int_{\underline{f}}^{\overline{f}} {P}(\{f \geq t\}) dt \\
            &= \inf_{P\in\cC} \int f dP.
    \end{align*}
    For 2-monotone $\underline{P}$, the results follow from \citet[Lemma 2]{delbaen1974convex}.
\end{proof}

\subsection{Proof of Theorem~\ref{thm: iipm_metric_for_bounded_continuous_functions}}
\maintheoremone*
\begin{proof}
    ($\implies$) Let $U$ be any open set in $\cX$ and $F$ the complement. Consider the distance $d(x,F) = \min_{y\in F}d(x, y)$. For $n=1,2,\dots$, let $f_n(x) = \min(1, nd(x,F))$. Then, $\sup_{x\in S}|f_n - f| \to 0$, where $f = \mathbf{1}_{U}$ is the indicating function. Now, as Choquet integration is continuous with respect to the topology of uniform convergence~\citep[Proposition C.5(ix)]{troffaes2014lower}, we have
    \begin{align*}
        \lim_{n\to \infty} \circint f_n d {\nu} = \circint \mathbf{1}_U d {\nu} = \circint \mathbf{1}_{U}d {\mu}.
    \end{align*}
    This implies $ {\nu}(U) =  {\mu}(U)$. Now, since $(\cX,d)$ is a metric space, for any $A \in \Sigma_\cX$, we can find an increasing sequence of open subsets $A_1 \subseteq A_2,\dots$ such that $A_n\uparrow A$. Since $\nu, \mu$ are continuous from below, we have $\lim_{n\to\infty}\nu(A_n) = \nu(A)$, but since for any open set $ {\nu}(A_n) =  {\mu}(A_n)$, we conclude $ {\nu}(A) =  {\mu}(A)$, thus $ {\nu}= {\mu}$.


    $(\impliedby)$ If $ {\nu}= {\mu}$, then it is trivial to see $\circint f d {\nu} = \circint f  {\mu}$ for any $f\in C_b(S)$. 
\end{proof}

\subsection{Proof of Corollary~\ref{lemma: IIPM generalises IPM}}
\iipmandipm*
\begin{proof}
    For any $P, Q\in\cP(\cX)$ and $\cF\subseteq \cC_b(\cX)$, we have
    \begin{align*}
        \operatorname{IIPM}_\cF(P,Q) 
            &= \sup_{f\in\cF}\left\{\left|\circint f dP - \circint f dQ\right|\right\} \\
            &= \sup_{f\in\cF}\left\{\left|\int f dP - \int f dQ\right|\right\}  \\
            &= \operatorname{IPM}_\cF(P,Q),
    \end{align*}
    since the Choquet integral for additive probability is the Lebesgue integral. 
\end{proof}

\subsection{Proof of Proposition~\ref{prop: pseudometric}}
\pseudometricprop*
\begin{proof}
    To prove it is a pseudometric, we need non-negativity, symmetry, and to show triangle inequality.
    \begin{itemize}
        \item \textbf{Non-negative:} It is obvious that $\operatorname{IIPM}_\cF(\nu_1,\nu_2) \geq 0$ for any pair of $\nu_1, \nu_2 \in V(\cX)$
        \item \textbf{Symmetric:} Symmetry is also apparent.
        \item \textbf{Triangle inequality:} Pick $\nu_1,\nu_2,\nu_3$ from $V(\cX)$, then
        \begin{align*}
            \operatorname{IIPM}_\cF(\nu_1,\nu_2) 
                &= \sup_{f\in \cF} \left|\circint f d\nu_1 - \circint f d\nu_2 \right| \\
                &= \sup_{f\in \cF} \left|\circint f d\nu_1 -\circint f d\nu_3 + \circint f d\nu_3 - \circint f d\nu_2 \right| \\
                &\leq \sup_{f\in \cF} \left|\circint f d\nu_1 - \circint f d\nu_3\right| + \sup_{f\in \cF} \left|\circint f d\nu_3 - \circint f d\nu_2 \right| \\
                &= \operatorname{IIPM}_\cF(\nu_1, \nu_3) + \operatorname{IIPM}_\cF(\nu_3, \nu_2).
        \end{align*}
    \end{itemize}
    This concludes the proof. We note that in some literature, pseudometric also requires $\operatorname{IIPM}_\cF(\nu, \nu) = 0$, and this also trivially holds in our case.
\end{proof}

\subsection{Proof of Theorem~\ref{thm: metrizes}}
\maintheoremtwo*
\begin{proof}
    To show that $\operatorname{IIPM}_\cF$ metrises the Choquet weak convergence of $\cV(\cX)$, we need to show for $\nu_n,\nu\in \cV(\cX)$, whenever $\operatorname{IIPM}_\cF(\nu_n, \nu)\to 0$ then $\nu_n$ converges to $\nu$ in the Choquet weak sense.

    Now pick $f \in C_b(\cX)$, since by assumption $\cF$ is dense in $C_b(\cX)$, there exists $g\in \cF$ satisfying $\|f - g\|_\infty < \epsilon$. By assumption, $\operatorname{IIPM}_\cF(\nu_n, \nu) \to 0$ means $|\circint g d\nu_n - \circint g d\nu| \to 0$ since $g\in\cF$. Thus, we have the bound
    \begin{align*}
        \left|\circint f d\nu_n - \circint f d\nu\right| 
            &\leq \left|\circint f d\nu_n - \circint g d\nu_n\right| + \left|\circint g d\nu_n - \circint g d\nu\right| + \left|\circint g d\nu - \circint f d\nu\right| \\
            &\leq 2\epsilon
    \end{align*}
    for all $f\in C_b(\cX)$ and $\epsilon > 0$, which implies $\nu_n$ converges to $\nu$ in the Choquet weak sense.

    Proving the other direction is a direct application of the result from Theorem~\ref{thm: iipm_metric_for_bounded_continuous_functions} since $\cF \subseteq C_b(\cX)$.
\end{proof}

\subsection{Proof of Proposition~\ref{prop: lower dudley metrises}}
\propositiondudleymetric*
\begin{proof}
    The core idea is to show that $\cF_d$ is dense in $C_b(\cX)$ and then to apply Theorem~\ref{thm: metrizes}. 
    
    To show denseness, pick any $f\in C_b(\cX)$, consider the sequence $$f_n(x) = \inf_{y\in\cX}\{f(y) + nd(x, y)\}.$$ Pick $\alpha>0$ such that $|f(x)| \leq \alpha$ for all $x\in\cX$, thus $|f(x) - f(y)| \leq |f(x)| + |f(y)| \leq \alpha$ for any $x,y \in \cX$. It's clear that $-\alpha \leq f_n \leq f$ is bounded and $n-$Lipschitz continuous. Now we prove $f_n \to f$ uniformly. Fix $\epsilon >0$, there is $\delta > 0$ such that $d(x,y) < \delta$ implies $|f(x) - f(y)| < \epsilon$ since $f$ is continuous. Then for all $x\in\cX$, we have
    \begin{align*}
        0 \leq f(x) - f_n(x) 
            &= f(x) - \inf_{y\in\cX}\{f(y) + nd(x, y)\} \\ 
            &= \sup_{y\in\cX}\{f(x) - f(y) - nd(x,y)\} \\
            &= \sup_{y\in\cX, d(x,y) \leq \nicefrac{2\alpha}{n}}\{f(x) - f(y) - nd(x,y)\}.
    \end{align*}
    If $n$ is such that $\frac{2\alpha}{n} < \delta $, then
    \begin{align*}
        f(x) - f_n(x) \leq \sup\{\epsilon - nd(x,y) \mid y\in\cX \text{ s.t. } d(x,y)\leq \nicefrac{2\alpha}{n}\} \leq \epsilon
    \end{align*}
    which follows that $\|f - f_n\| \leq \epsilon$, meaning $\cF_d$ is dense in $C_b(\cX)$ in the uniform norm.
\end{proof}

\subsection{Proof of Remark~\ref{remark: remark}}
\remarkltv*
\begin{proof}
    We provide an example based on \citet[Example 4]{montes20182} for completeness. Consider $\mathcal{X} = \{x_1, x_2,x_3\}$ and lower probabilities $\underline{P}_1, \underline{P}_2$ given by

\begin{center}
\begin{tabular}{l|cccccccc}
& $\emptyset$ & \{$x_1$\} & \{$x_2$\} & \{$x_3$\} & \{$x_1$, $x_2$\} & \{$x_1$, $x_3$\} & \{$x_2$, $x_3$\} & $\mathcal{X}$ \\ \hline
$\underline{P}_1$ & 0                        & 0        & 0        & 0        & $\sfrac{1}{2}$            & $\sfrac{1}{2}$            & $\sfrac{1}{2}$            & 1                            \\
$\underline{P}_2$ & 0                        & 0        & 0        & 0        & $\sfrac{1}{3}$            & $\sfrac{1}{3}$            & $\sfrac{1}{3}$            & 1                           
\end{tabular}
\end{center}

Then, we know $d_1 = \sup_{A\subseteq \mathcal{X}} |\underline{P}_1(A) - \underline{P}_2(A)| = \sfrac{1}{6}$, whereas $d_2 = \sum_{x\in\cX}|\underline{P}_1(\{x\}) - \underline{P}_2(\{x\})| = 0$, therefore $d_1 \neq d_2$.
\end{proof}

\subsection{Proof of Lemma~\ref{lemma: lower_prob_epsilon}}
\lemmaepsiloncontamination*
\begin{proof}
    This lemma is proved in \citet[Section 2.9.2]{walley1991statistical}.
\end{proof}

\subsection{Proof of Theorem~\ref{thm: michele theorem}}
\theoremlowerkantorovich*
\begin{proof}
    Our goal is to show that $\operatorname{IIPM}_{\cF_W}(\underline{P}_\epsilon, \underline{Q}_\epsilon)$ coincides with the objective of the restricted lower probability Kantorovich problem. Note that we have,
    \begin{align*}
        \operatorname{IIPM}_{\cF_W}(\underline{P}_\epsilon, \underline{Q}_\epsilon) 
            &= \sup_{f\in \cF_W}\left|\circint f d\underline{P}_\epsilon - \circint f d\underline{Q}_\epsilon\right| \\ 
            &= \sup_{f\in\cF_W} \left|\int_{\underline{f}}^{\overline{f}} [\underline{P}_\epsilon(\{f\geq t\}) - \underline{Q}_\epsilon(\{f \geq t\})] dt \right| \\
            &\stackrel{(\heartsuit)}{=} \sup_{f\in\cF_W} \left|\int_{\underline{f}}^{\overline{f}} [\underline{P}_\epsilon(\{f> t\}) - \underline{Q}_\epsilon(\{f > t\})] dt \right| \\
            &= \sup_{f\in\cF_W} \left|\int_{\underline{f}}^{\overline{f}} [(1-\epsilon)P(\{f> t\}) - (1-\epsilon)Q(\{f > t\})] dt \right| \\
            &\stackrel{(\clubsuit)}{\leq} (1-\epsilon) \sup_{f\in\cF_W} \left|\int f dP - \int f dQ\right| \\
            &\stackrel{(\spadesuit)}{=} (1-\epsilon) \inf_{\pi \in \Gamma(P, Q)} \int c(x,y) \pi(dx, dy)
    \end{align*}
    where $\Gamma(P, Q)$ is the set of joint probability with marginals being $P$ and $Q$. We replaced the inequality with strict inequality in $\heartsuit$ as in \citet[Proposition C.3.ii]{troffaes2014lower}. In $\clubsuit$ we used the fact that Choquet integration returns the standard Lebesgue integral when the capacity is a probability measure, and in $\spadesuit$ we used Kantorovich-Rubinstein theorem~\citep[Lecture 3]{ambrosio2021lectures}, which established the duality between the Kantorovich problem and an IPM formulation using function class $\cF_W$. This recovered the result from \citet{caprio2024optimal} through the use of our IIPM framework.
\end{proof}

\subsection{Proof of Proposition~\ref{prop: MMI alternative}}
\propMCUformulation*
\begin{proof}
    To show the result, it follows from the definition that
    \begin{align*}
        \operatorname{MMI}_\cF(\underline{P}) 
            &= \sup_{f\in\cF}\left\{\circint f d\overline{P} - \circint f d\underline{P} \right\} \\
            &= \sup_{f\in\cF} \int_{\underline{f}}^{\overline{f}} [\overline{P}(\{f\geq t\}) - \underline{P}(\{f \geq t\})] dt \\
            &= \sup_{f\in\cF} \int_{\underline{f}}^{\overline{f}} \left[1-\left(\underline{P}(\{f< t\}) + \underline{P}(\{f \geq t\})\right)\right] dt,
    \end{align*}
    which completes the proof.
\end{proof}

\subsection{Proof of Proposition~\ref{prop: epsilon contamination}}
\propmcuepsilon*
\begin{proof}
    First, it can be shown readily the upper probability model for an $\epsilon$ contamination set:    
    \begin{align*}
        \overline{P}(A) = \begin{cases}
        (1-\epsilon) P(A) + \epsilon & \text{for } A\in\Sigma_\cX  \backslash \emptyset \\
        0 & \text{for } A = \emptyset
        \end{cases}.
    \end{align*}
    Next, we have
    \begin{align*}
        \operatorname{MMI}_\cF(\underline{P}_\epsilon) 
            &= \sup_{f\in\cF}\left\{\circint f d\overline{P} - \circint f d\underline{P}\right\} \\
            &= \sup_{f\in\cF} \int_{\underline{f}}^{\overline{f}} [\overline{P}(\{f\geq t\}) - \underline{P}(\{f \geq t\})] dt \\
            &= \sup_{f\in\cF} \int_{\underline{f}}^{\overline{f}} \left[\epsilon + (1-\epsilon)P(\{f > t\}) - (1-\epsilon)P(\{f > t\})\right] dt \\
            &= \sup_{f\in\cF} \int_{\underline{f}}^{\overline{f}} \epsilon \; dt \\
            &= \epsilon \left(\sup_{f\in\cF} [\overline{f} - \underline{f}]\right) = \epsilon \left(\sup_{f\in\cF} \sup_{x,y\in\cX} |f(x) - f(y)|\right),
    \end{align*}
    which shows the result. For $\cF_{TV}$, it is straightforward to see that the maximum value the terms in the bracket of the last equation can attain is $1$. Therefore,
    \begin{align*}
        \operatorname{MMI}_{\cF_{TV}}(\underline{P}_\epsilon) = \epsilon.
    \end{align*}
    This completes the proof.
\end{proof}

\subsection{Proof for Theorem~\ref{thm: mcu_satisfies_axioms}}

To prove our uncertainty measure satisfies the axioms, we need the following useful lemma.

\begin{lemma}[Marginalisation preserves 2-monotonicty.]
\label{lemma: 2monotone_preseerve}
    If $\underline{P}$ is a 2-monotone capacity defined on the joint measurable space $(\cX\times\cY, \Sigma_\cX\times \Sigma_\cY)$, and let $\underline{P}_1(\cdot) = \underline{P}(\cdot, \cY)$ be the marginal capacity on $(\cX,\Sigma_\cX)$ and $\underline{P}_2(\cdot) = \underline{P}(\cX, \cdot)$ be the marginal capacity on $(\cY,\Sigma_\cY)$, then $\underline{P}_1$ and $\underline{P}_2$ are also 2-monotone. Marginalisation also preserves 2-alternating.
\end{lemma}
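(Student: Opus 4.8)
The plan is to reduce everything to the observation that the sets of the form $A \times \cY$ with $A \in \Sigma_\cX$ constitute a sub-$\sigma$-algebra of $\Sigma_\cX\times\Sigma_\cY$ on which the lattice operations $\cup$, $\cap$ (and the inclusion order) are carried over verbatim from $\Sigma_\cX$. Concretely, for all $A,B\in\Sigma_\cX$ one has $(A\cup B)\times\cY = (A\times\cY)\cup(B\times\cY)$, $(A\cap B)\times\cY = (A\times\cY)\cap(B\times\cY)$, $\emptyset\times\cY=\emptyset$, $\cX\times\cY$ is the whole space, and $A\subseteq B$ if and only if $A\times\cY\subseteq B\times\cY$. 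Since $\underline{P}_1(A) = \underline{P}(A\times\cY)$, the set function $\underline{P}_1$ is nothing but the restriction of $\underline{P}$ to this sub-$\sigma$-algebra, read back on $\Sigma_\cX$; the argument for $\underline{P}_2$ is identical after swapping the roles of $\cX$ and $\cY$.

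First I would verify that $\underline{P}_1$ is itself a capacity in the sense of Definition~\ref{def: capacities}: $\underline{P}_1(\emptyset) = \underline{P}(\emptyset) = 0$ and $\underline{P}_1(\cX) = \underline{P}(\cX\times\cY) = 1$; monotonicity of $\underline{P}_1$ follows from that of $\underline{P}$ together with the inclusion equivalence above; and continuity from above and below is inherited because $A_n\uparrow A$ (resp.\ $A_n\downarrow A$) in $\Sigma_\cX$ forces $A_n\times\cY\uparrow A\times\cY$ (resp.\ $\downarrow$), whence $\underline{P}_1(A_n) = \underline{P}(A_n\times\cY)\to\underline{P}(A\times\cY) = \underline{P}_1(A)$.

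The 2-monotonicity of $\underline{P}_1$ then drops out by applying the convexity inequality for $\underline{P}$ to the pair $A\times\cY,\ B\times\cY$:
\begin{align*}
\underline{P}_1(A\cup B) + \underline{P}_1(A\cap B)
&= \underline{P}((A\times\cY)\cup(B\times\cY)) + \underline{P}((A\times\cY)\cap(B\times\cY))\\
&\geq \underline{P}(A\times\cY) + \underline{P}(B\times\cY) = \underline{P}_1(A) + \underline{P}_1(B).
\end{align*}
The same computation with $\underline{P}_2(\cdot)=\underline{P}(\cX,\cdot)$ handles the second marginal, and reversing every ``$\geq$'' to ``$\leq$'' (that is, feeding in the 2-alternating inequality for $\underline{P}$ instead of the 2-monotone one) yields the 2-alternating claim. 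I do not expect any real obstacle: the only step that is not purely formal is the inheritance of continuity from above and below, and even that is immediate once one notes that the map $A\mapsto A\times\cY$ commutes with monotone set limits.
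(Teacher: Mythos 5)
Your proposal is correct and follows essentially the same route as the paper's proof: both reduce 2-monotonicity of the marginal to the identities $(A\cup B)\times\cY=(A\times\cY)\cup(B\times\cY)$ and $(A\cap B)\times\cY=(A\times\cY)\cap(B\times\cY)$ and then apply the 2-monotone inequality of $\underline{P}$ to the cylinder sets, with the 2-alternating case obtained by reversing the inequality. Your additional check that $\underline{P}_1$ is itself a capacity (normalisation, monotonicity, continuity) is a harmless extra that the paper leaves implicit.
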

\begin{proof}
    Recall the definition of 2-monotonicty, for any $A, B\in \Sigma_\cX\times\Sigma_\cY$, we have
    \begin{align*}
        \underline{P}(A\cup B) + \underline{P}(A\cap B) \geq \underline{P}(A) + \underline{P}(B).
    \end{align*}
    Now pick $A_1, B_1 \in \Sigma_\cX$, then consider,
    \begin{align*}
        \underline{P}_1(A_1 \cup B_1) + \underline{P}_1(A_1 \cap B_1) 
        &= \underline{P}\left(\left(A_1 \cup B_1\right)\times \cY\right) + \underline{P}\left(\left(A_1 \cap B_1\right)\times \cY\right) \\
        &= \underline{P}\left(\left(A_1 \times \cY \right)\cup \left(B_1 \times \cY \right)\right) + \underline{P}\left(\left(A_1\times \cY\right) \cap \left(B_1\times \cY \right)\right) \\
        & \geq \underline{P}\left(A_1\times \cY\right) + \underline{P}\left(B_1\times \cY\right) \\
        & = \underline{P_1}(A_1) + \underline{P}_1(B_1).
    \end{align*}
    This shows that 2-monotonicity holds for $\underline{P}_1$ and by symmetry, it also holds for $\underline{P}_2$. Therefore, marginalisation preserves 2-monotonicity. The steps to show marginalisation preserves 2-alternating are analogous.
\end{proof}

\theoremMCUaxioms*
\begin{proof}
\textbf{Axiom A1.}
    Starting from Axiom A1 that $\operatorname{MMI}_\cF$ is non-negative and bounded. First, recall that lower probabilities are super-additive, meaning, for $A, B\in\Sigma_\cX$ with $A\cap B =\emptyset$, we have
    \begin{align*}
        \underline{P}(A \cup B) \geq \underline{P}(A) + \underline{P}(B).
    \end{align*}
    Now let $A = \{f \geq t\}$ and $B = \{f < t\}$, then we have 
    \begin{align*}
        1 \geq \underline{P}(\{f \geq t\}) + \underline{P}(\{f < t\}). 
    \end{align*}
    This means the integrand in \Cref{eq: alternative_cbd} is always non-negative, thus, $\operatorname{MMI}_\cF$ is always non-negative. To show boundedness, notice that,
    \begin{align*}
        |\operatorname{MMI}_\cF(\underline{P})| 
            &= \left|\sup_{f\in\cF} \int_{\underline{f}}^{\overline{f}} [1 - (\underline{P}(\{f \geq t\}) + \underline{P}(\{f < t\}))]\right| \\ 
            &\leq \sup_{f\in\cF} \int_{\underline{f}}^{\overline{f}} \left| 1 - (\underline{P}(\{f \geq t\}) + \underline{P}(\{f < t\}))\right| dt \\
            &\leq \sup_{f\in\cF} \int_{\underline{f}}^{\overline{f}} 1 dt \\
            &= \sup_{f\in\cF} \sup_{x, y\in\cX} |f(x) - f(y)| \\
            &\leq 2 \sup_{f\in\cF}  \sup_{x\in \cX} |f(x)|.
    \end{align*} 
    As $f\in C_b(\cX)$, by definition, it is a bounded function, thus $\sup_{f\in\cF}\sup_{x\in\cX} |f(x)|$ is bounded.
    \paragraph{Axiom A2.} For continuity, our goal is to show that given a sequence of lower probabilities $\underline{P}_n$ converges to $\underline{P}$ in the Choquet weak convergence sense, then $\operatorname{MMI}_\cF(\underline{P}_n) \to \operatorname{MMI}_\cF(\underline{P})$. Pick $\epsilon > 0$, we know there exists $n_\epsilon \in \NN$ such that for all $n > n_\epsilon$, $$\left|\circint f d\underline{P}_n - \circint f d\underline{P}\right| < \epsilon$$ for all $f\in C_b(\cX)$. An immediate result that follows from \citet[Proposition C.5.iv]{troffaes2014lower} for upper probabilities is,
    \begin{align*}
        \left|\circint f d\overline{P}_n - \circint f d\overline{P}\right| = \left|\circint -f d\underline{P}_n - \circint -f d\underline{P}\right| < \epsilon
    \end{align*}
    since $-f \in C_b(\cX)$. Now with this $n$, we know 
    \begin{align*}
        & \left|\operatorname{MMI}_\cF(\underline{P}_n) - \operatorname{MMI}_\cF(\underline{P})\right| \\
            &= \left|\sup_{f\in\cF}\{\circint f d\overline{P}_n - \circint f d\underline{P}_n\} - \sup_{g\in\cF}\{\circint g d\overline{P} - \circint g d\underline{P}\}\right| \\
            &= \left|\sup_{f\in\cF}\{\circint f d\overline{P}_n - \circint f d\overline{P} + \circint f d\overline{P} - \circint f d\underline{P} + \circint f d\underline{P} - \circint f d\underline{P}_n\} - \sup_{g\in\cF}\{\circint g d\overline{P} - \circint g d\underline{P}\}\right| \\ 
            &\leq \left| \sup_{f\in\cF}\{\circint f d\overline{P}_n - \circint f d\overline{P}\} + \sup_{f\in\cF}\{\circint f d\underline{P}_\epsilon - \circint f d\underline{P}\} + \sup_{g\in\cF}\{\circint g d\overline{P} - \circint g d\underline{P}\} -\sup_{g\in\cF}\{\circint g d\overline{P} - \circint g d\underline{P}\}\right| \\
            &\leq \sup_{f\in\cF}\{|\circint f d\overline{P}_n - \circint f d\overline{P}| +  \sup_{f\in\cF}\{|\circint f d\underline{P}_n - \circint f d\underline{P}|\} < 2\epsilon .
    \end{align*}
    Thus, we have proven continuity of $\operatorname{MMI}_\cF$.
    
    \paragraph{Axiom A3.} To prove monotonicity, notice if $\underline{P}$ is setwise dominated by $\underline{Q}$, then we have for any $t$ and any $f \in \cF$,
    \begin{align*}
        \underline{P}(\{f \geq t\}) + \underline{P}(\{f < t\}) &\leq \underline{Q}(\{f \geq t\}) + \underline{Q}(\{f < t\})  \\
        \implies 1 - (\underline{P}(\{f \geq t\}) + \underline{P}(\{f < t\})) &\geq 1 - (\underline{Q}(\{f \geq t\}) + \underline{Q}(\{f < t\})).
    \end{align*}
    Since the integrand of one integral is always at least as large as the other one, by \citet[Proposition C.5.vi]{troffaes2014lower}, we have
    \begin{align*}
        \int_{\underline{f}}^{\overline{f}} 1 - (\underline{P}(\{f \geq t\}) + \underline{P}(\{f < t\})) dt &\geq \int_{\underline{f}}^{\overline{f}} 1 - (\underline{Q}(\{f \geq t\}) + \underline{Q}(\{f < t\})) dt \\
        \implies \operatorname{MMI}_\cF(\underline{P}) &\geq \operatorname{MMI}_\cF(\underline{Q}).
    \end{align*}
    
    \paragraph{Axiom A4.} Showing probability consistency is almost trivial as any $P \in \cP(\cX)$ is self-conjugate, therefore $\operatorname{MMI}_\cF(P) = 0$.
    
    \paragraph{Axiom A5.} Recall $\cF_{12}$ is defined as 
    \begin{align*}
        \cF_{12} := \{f\in C_b(\cX) \mid f(x_1,x_2) = f_1(x_1) + f_2(x_2) \text{ for some } f_1\in C_b(\cX_1), f_2\in C_b(\cX_2)\}
    \end{align*}
    also that for axiom A5 we are working with $2$-monotonic lower probabilities $\underline{P}$, meaning for any event $A, B\in\Sigma_\cX$,
    \begin{align}
        \underline{P}(A\cup B) + \underline{P}(A \cap B) \geq \underline{P}(A) + \underline{P}(B).
    \end{align}
    Now with \citet[Proposition C.7]{troffaes2014lower} we know that for 2-monotone capacities, the Choquet integral is super-additive, meaning, for $f\in \cF_{12}$
    \begin{align*}
        \circint f d\underline{P} = \circint (f_1 + f_2) d\underline{P} \geq \circint f_1 d\underline{P} + \circint f_2 d\underline{P} = \circint f_1 d\underline{P}_1 + \circint f_2 d\underline{P}_2.
    \end{align*}
    Notice the last equality holds because $\underline{P}(\{f_1 \geq t\}) = \underline{P}\left(\left\{\{x_1\in\cX \mid f_1(x_1)\geq t\} \times \cX_2 \right\}\right) = \underline{P}_1(\{x_1 \in \cX_1\mid f_1(x_1) \geq t\})$.
    Similarly, we can show that for upper probabilities of 2-monotone lower probabilities, they are 2-alternating, meaning that 
    \begin{align*}
        \circint f d\overline{P} = \circint (f_1 + f_2) d\overline{P} \leq \circint f_1 d\overline{P} + \circint f_2 d\overline{P} = \circint f_1 d\overline{P}_1 + \circint f_2 d\overline{P}_2
    \end{align*}
    Now, combine the two results, we have,
    \begin{align*}
        &\operatorname{MMI}_{\cF_{12}}(\underline{P}) \\
        &= \sup_{f\in\cF_{12}} \left\{\circint f d\overline{P} - \circint f d\underline{P}\right\} \\
        &\leq \sup_{f\in\cF_{12}} \left\{\circint f_1 d\overline{P}_1 + \circint f_2 d\overline{P}_2 - \left(\circint f_1 d\underline{P}_1 + \circint f_2 d\underline{P}_2\right)\right\} \\ 
        &\leq \sup_{f\in\cF_1} \left\{\circint f d\overline{P}_1 - \circint f d\underline{P}_1\right\} + \sup_{f\in\cF_2} \left\{\circint f d\overline{P}_2 - \circint f d\underline{P}_2 \right\} \\
        &= \operatorname{MMI}_{\cF_1}(\underline{P}_1) + \operatorname{MMI}_{\cF_2}(\underline{P}_2).
    \end{align*}
\paragraph{Axiom A6.} Now when $\underline{P}_1$ and $\underline{P}_2$ are strongly independent, then the credal set associated to $\underline{P}$, denote as $\cC$, is related to the credal set associated to $\underline{P}_1$ and $\underline{P}_2$, denote as $\cC_1$ and $\cC_2$, as follows:
\begin{align*}
    \cC_1 := \{P \in \cP(\cX) \mid P = P_1\cdot P_2 \;\text{ for some } P_1\in\cC_1, P_2\in \cC_2\} .
\end{align*}
To show that the uncertainty is additive when strong independence holds, we use the following representation of the Choquet integral for 2-monotone lower probabilities, 
\begin{align*}
    \circint f d\underline{P} = \inf_{P \in \cC} \int f d P
\end{align*}
and similarly for 2-alternating upper probabilities, we have $$\circint f d\overline{P} = \sup_{P \in \cC} \int f d P.$$ Now we can show the result, starting with
\begin{align*}
    &\operatorname{MMI}_{\cF_{12}}(\underline{P}) \\ 
    &= \sup_{f\in\cF_{12}}\left\{ \circint f d\overline{P} - \circint f d\underline{P} \right\} \\
    &\stackrel{(A)}{=} \sup_{f_1\in\cF_{1}, f_2\in\cF_2} \left\{\sup_{P\in \cC}\int (f_1 + f_2) dP - \inf_{Q\in\cC} \int (f_1 + f_2) dQ\right\} \\
    &\stackrel{(B)}{=} \sup_{f_1\in\cF_{1}, f_2\in\cF_2} \left\{\sup_{P_1\in \cC_1, P_2\in\cC_2} \left\{\int f_1 dP_1 + \int f_2 dP_2 \right\}- \inf_{Q_1\in\cC_1, Q_2\in\cC_2} \left\{\int f_1 dQ_1 - \int f_2 dQ_2\right\}\right\} \\
    &= \sup_{f_1\in\cF_{1}, f_2\in\cF_2} \left\{\sup_{P_1\in \cC_1} \int f_1 dP_1 + \sup_{P_2\in\cC_2}\int f_2 dP_2 - \inf_{Q_1\in\cC_1} \int f_1 dQ_1 - \inf_{Q_2\in \cC_2}\int f_2 dQ_2 \right\} \\
    &\stackrel{(C)}{=} \sup_{f_1\in\cF_{1}, f_2\in\cF_2} \left\{\circint f_1 d\overline{P_1} + \circint f_2 d\overline{P_2} - \circint f_1 d\underline{P_1} - \circint f_2 d\underline{P_2} \right\} \\
    &= \sup_{f_1\in\cF_{1}} \left\{\circint f_1 d\overline{P_1}  - \circint f_1 d\underline{P_1} \right\} + \sup_{f_2\in\cF_2} \left\{ \circint f_2 d\overline{P_2} - \circint f_2 d\underline{P_2} \right\} \\
    &= \operatorname{MMI}_{\cF_1}(\underline{P}_1) + \operatorname{MMI}_{\cF_2}(\underline{P}_2).
\end{align*}
In step (A), we used the fact that the Choquet integral of 2-monotone capacities is the lower envelope of the set of expectations with respect to the credal set~\citep[Lemma 2]{delbaen1974convex}. In Step (B), we used the fact that due to strong independence, every element in $P\in C$ can be written as a product of some $P_1\in \cC_1$ and $P_2 \in \cC_2$ and the fact that Lebesgue integration is linear, and $\int f_1(x_1) P(dx_1, dx_2) = \int f_1(x_1) P_1(dx_1)$. Finally, in step (C), we used Lemma \ref{lemma: 2monotone_preseerve}, which stated that marginals of a 2-monotone capacity remain 2-monotone, therefore, we can write $\inf_{P_1\in\cC_1} \int f_1 dP_1$ back as a Choquet integral.
\end{proof}

\subsection{Proof of Proposition~\ref{prop: upperbound}}

Finally, to prove the result for the upper bound, we first recall an intermediate result from \citet[Proposition 8]{montes20182}, which provides a construction for the best pessimistic $\epsilon$-contamination set approximation to any lower probability $\underline{P} \in \mathcal{P}(\mathcal{X})$. Understanding this proposition requires clarifying the concept of dominance and outer approximation.

\begin{defn}[Outer approximation]
    Let $\cX$ be finite and $\underline{Q},\underline{P}\in\mathcal{P}(\cX)$ two lower probabilities. We say $\underline{Q}$ is an outer approximation of $\underline{P}$ if for every event $A \in 2^\mathcal{X}$, $\underline{Q}(A) \leq \underline{P}(A)$.
\end{defn}

The direction of the inequality might be confusing at first, but by realising $\underline{Q}(A) \leq \underline{P}(A)$ for every event $A \in 2^\mathcal{X}$, it means the core of $\underline{Q}$, i.e. $$\mathcal{M}(\underline{Q}) = \{Q\in\mathbb{P}(\cX): Q(A) \geq \underline{Q}(A)\quad \forall A \in 2^\mathcal{X}\},$$ is at least as large as the core of $\underline{P}$, i.e. $\mathcal{M}(\underline{P}) \subseteq \mathcal{M}(\underline{Q})$. 

\begin{defn}[Undominated outer approximation in $\cC_\star(\cX)$]
    Let $\underline{P}\in\underline{\cP}(\cX)$ be a lower probability. Let $\cC_\star(\cX)\subseteq \underline{\cP}(\cX)$ be a subspace of which the outer approximation $\underline{Q}$ resides. We say $\underline{Q}$ is an \emph{undominated outer approximation} of $\underline{P}$ in $\cC_\star(\mathcal{X})$ if there exists no other lower probabilities $\underline{Q}'$ in $\cC_\star(\cX)$ such that $\cM(\underline{P}) \subseteq \mathcal{M}(\underline{Q}') \subsetneq \mathcal{M}(\underline{Q}).$
\end{defn}

Now we have all the language to understand the result from \citet{montes20182}. Let $\cC_\epsilon(\cX))$ denote the space of all $\epsilon$-contamination models defined on $\cX$.

\begin{proposition}
\label{prop: epsilondominance}
    Let $\underline{P}\in\underline{\mathcal{P}}(\cX)$ be a lower probability. Define $\epsilon \in (0,1)$ and the probability $P_0$ by:
    \begin{align*}
        \epsilon = 1 - \sum_{x\in\cX} \underline{P}(\{x\}), \quad\quad P_0(\{x\}) = \frac{\underline{P}(\{x\})}{\sum_{x\in\cX} \underline{P}(\{x\})}, \text{ for every } x \in \mathcal{X}.
    \end{align*}
    Denote by $\underline{P}_\epsilon$ the $\epsilon$-contaminated model constructed following \Cref{lemma: lower_prob_epsilon}. Then, $\underline{P}_\epsilon$ is the unique undominated outer approximation of $\underline{P}$ in $\cC_\epsilon(\cX)$.
\end{proposition}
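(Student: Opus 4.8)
The plan is to exhibit $\underline{P}_\epsilon$ as the contamination model whose core is \emph{smallest} among all contamination models that outer-approximate $\underline{P}$, and then to read off both undominatedness and uniqueness from this minimality. First I would record the explicit form of $\underline{P}_\epsilon$. By the given choice of $\epsilon$ and $P_0$ we have $(1-\epsilon)P_0(\{x\}) = \underline{P}(\{x\})$ for every $x\in\cX$, so by Lemma~\ref{lemma: lower_prob_epsilon}, for every $A\in\Sigma_\cX\setminus\{\cX\}$, $\underline{P}_\epsilon(A) = (1-\epsilon)P_0(A) = \sum_{x\in A}\underline{P}(\{x\})$, while $\underline{P}_\epsilon(\cX)=1$. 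Since $\underline{P}$ is a lower probability it is super-additive, hence $\underline{P}(A)\ge \sum_{x\in A}\underline{P}(\{x\}) = \underline{P}_\epsilon(A)$ for all $A$, which shows $\underline{P}_\epsilon$ is an outer approximation of $\underline{P}$. I would also check the non-degeneracy $\epsilon=1-\sum_x\underline{P}(\{x\})\in(0,1)$ and that $P_0\in\cP(\cX)$, both immediate from $0\le\sum_x\underline{P}(\{x\})\le \underline{P}(\cX)=1$ under the stated assumption that the sum lies strictly between $0$ and $1$.

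The key step is a containment lemma: for \emph{every} contamination model $\underline{Q}\in\cC_\epsilon(\cX)$, say with base $R$ and level $\delta$, that outer-approximates $\underline{P}$, one has $\cM(\underline{P}_\epsilon)\subseteq\cM(\underline{Q})$. To prove it I would use the outer-approximation inequality only on singletons: $\underline{Q}(\{x\})=(1-\delta)R(\{x\})\le\underline{P}(\{x\})$ for each $x$. Summing over $x\in A$ gives $\underline{Q}(A)=(1-\delta)R(A)\le\sum_{x\in A}\underline{P}(\{x\})=\underline{P}_\epsilon(A)$ for every $A\neq\cX$, with equality at $\cX$. Thus $\underline{Q}\le\underline{P}_\epsilon$ setwise, and since for coherent lower probabilities the core reverses set inclusion ($\underline{Q}\le\underline{P}_\epsilon$ implies $\{P:P\ge\underline{P}_\epsilon\}\subseteq\{P:P\ge\underline{Q}\}$), we obtain $\cM(\underline{P}_\epsilon)\subseteq\cM(\underline{Q})$. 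In words, $\underline{P}_\epsilon$ has the smallest core of any outer-approximating contamination model.

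From this minimality both conclusions follow quickly. For undominatedness, suppose some $\underline{Q}\in\cC_\epsilon(\cX)$ satisfied $\cM(\underline{P})\subseteq\cM(\underline{Q})\subsetneq\cM(\underline{P}_\epsilon)$; then $\underline{Q}$ is an outer approximation, so the containment lemma forces $\cM(\underline{P}_\epsilon)\subseteq\cM(\underline{Q})$, contradicting the strict inclusion. Hence no such $\underline{Q}$ exists and $\underline{P}_\epsilon$ is undominated. For uniqueness, let $\underline{Q}$ be any undominated outer approximation. The lemma gives $\cM(\underline{P}_\epsilon)\subseteq\cM(\underline{Q})$; if this inclusion were strict, then $\cM(\underline{P})\subseteq\cM(\underline{P}_\epsilon)\subsetneq\cM(\underline{Q})$ would exhibit $\underline{P}_\epsilon$ as a dominating outer approximation, contradicting that $\underline{Q}$ is undominated. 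Therefore $\cM(\underline{Q})=\cM(\underline{P}_\epsilon)$, and since a contamination model is the lower envelope of its core, equality of cores yields $\underline{Q}=\underline{P}_\epsilon$.

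The main obstacle is not any single hard estimate but isolating the right lemma: the observation that the outer-approximation constraint, although stated for all events, is effectively \emph{driven by the singletons}, and that summing it reconstructs exactly $\underline{P}_\epsilon$. Two supporting facts must be invoked carefully: that $\underline{P}$ being a lower probability yields super-additivity (needed so that $\underline{P}_\epsilon\le\underline{P}$), and that passing between a coherent lower probability and its core is order-reversing and injective (so that equality of cores yields equality of the models). If instead $\cC_\epsilon(\cX)$ is read as the contamination models at the \emph{fixed} level $\epsilon$, the same singleton-summation argument forces the base to equal $P_0$, so that $\underline{P}_\epsilon$ is in fact the only outer approximation at that level; the containment lemma subsumes this case as well.
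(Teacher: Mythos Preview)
Your argument is correct and self-contained. The paper itself does not supply a proof of this proposition: it is explicitly \emph{recalled} from \citet[Proposition~8]{montes20182} as an intermediate tool for deriving Proposition~\ref{prop: upperbound}, without any derivation in the appendix. So there is no in-paper proof to compare against; what you have written is strictly more than the paper provides.

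A couple of brief remarks on your write-up. First, your containment lemma is indeed the heart of the matter, and the singleton-summation trick works precisely because every $\epsilon$-contamination model has the form $(1-\delta)R(\cdot)$ with $R$ \emph{additive}, so its values on non-trivial events are determined by its values on singletons; this is worth stating explicitly, since it is exactly where the restriction to $\cC_\epsilon(\cX)$ is used. Second, in the undominatedness step you implicitly use that $\cM(\underline{P})\subseteq\cM(\underline{Q})$ implies $\underline{Q}$ is an outer approximation in the sense of the paper's Definition; this holds because $\epsilon$-contamination models are coherent (indeed 2-monotone) and hence equal the lower envelope of their core, so setwise dominance and core inclusion are equivalent---you note this at the end, but it is also doing work in the earlier step. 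Finally, your closing paragraph about reading $\cC_\epsilon(\cX)$ as contamination models at the fixed level $\epsilon$ is unnecessary here: the paper defines $\cC_\epsilon(\cX)$ as ``the space of all $\epsilon$-contamination models defined on $\cX$'', i.e.\ all levels, which is the interpretation your main argument handles.
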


With this result, we can now derive the upper bound for $\operatorname{MMI}_{\mathcal{F}_{TV}}(\underline{P})$.

\mmilinear*
\begin{proof}
    We know for any $\underline{P}$, $\underline{P}_\epsilon$ constructed as in Proposition~\ref{prop: epsilondominance}, is the unique undominated outer approximation of $\underline{P}\in\cC_\epsilon(\cX)$. Since it is an outer approximation, meaning that $\underline{P}_\epsilon(A) \leq \underline{P}(A)$ for all events $A\in 2^\cX$, therefore by the monotonicity axiom (Axiom 3) in theorem~\ref{thm: mcu_satisfies_axioms}, we know for any $\cF\subseteq C_b(\cX)$, we have
    \begin{align*}
        \operatorname{MMI}_\cF(\underline{P}) \leq \operatorname{MMI}_\cF(\underline{P}_\epsilon).
    \end{align*}
    Now choose $\cF = \cF_{TV}$ as in Definition~\ref{defn: ltv}, then along with Proposition~\ref{prop: epsilon contamination}, we have
    \begin{align*}
        \operatorname{MMI}_{\cF_{TV}}(\underline{P}) 
            &\leq \operatorname{MMI}_{\cF_{TV}}(\underline{P}_\epsilon)  \\
            &= \epsilon. \\
            &= 1 - \sum_{x\in\cX} \underline{P}(\{x\}).
    \end{align*}
    This concludes the derivation.
\end{proof}

\section{Further experimental details}
\label{appendix: more experiments}

\subsection{Ablation study: Correlation with Generalised Hartley and Entropy Difference.}

In the main text, we observed that the performance of MMI, its linear-time upper bound MMI-Lin, and the generalised Hartley measure are nearly equivalent in the context of selective classification. This raises the question: \emph{are they numerically equivalent?} That is, is generalised Hartley and MMI using the total variation function space, i.e. lower TV, providing the same value? While staring at the equations tells us they are not equivalent, our ablation study suggests that even though they are highly correlated, they are not the same. 

To investigate that, we use the UCI wine dataset~\citep{aeberhard1992wine}, perform one round of train-test split, train 10 random forests models as described in the main text to construct the lower probabilities, and then compute the epistemic uncertainty measurements for the test data. After that, we plot all possible pairwise comparison plots between the methods, i.e.MMI, MMI-Lin, Generalised Hartley, and Entropy difference.  We see that MMI and GH are highly correlated but not exactly perfectly correlated. The upper bound is also highly correlated, suggesting empirically (along with experiments in the main text), that this approximation is quite tight. Also, we see that MMI and entropy differences do not correlate that well, which explains the difference in downstream performance in the experiments.

\begin{figure}
    \centering
    \includegraphics[width=\linewidth]{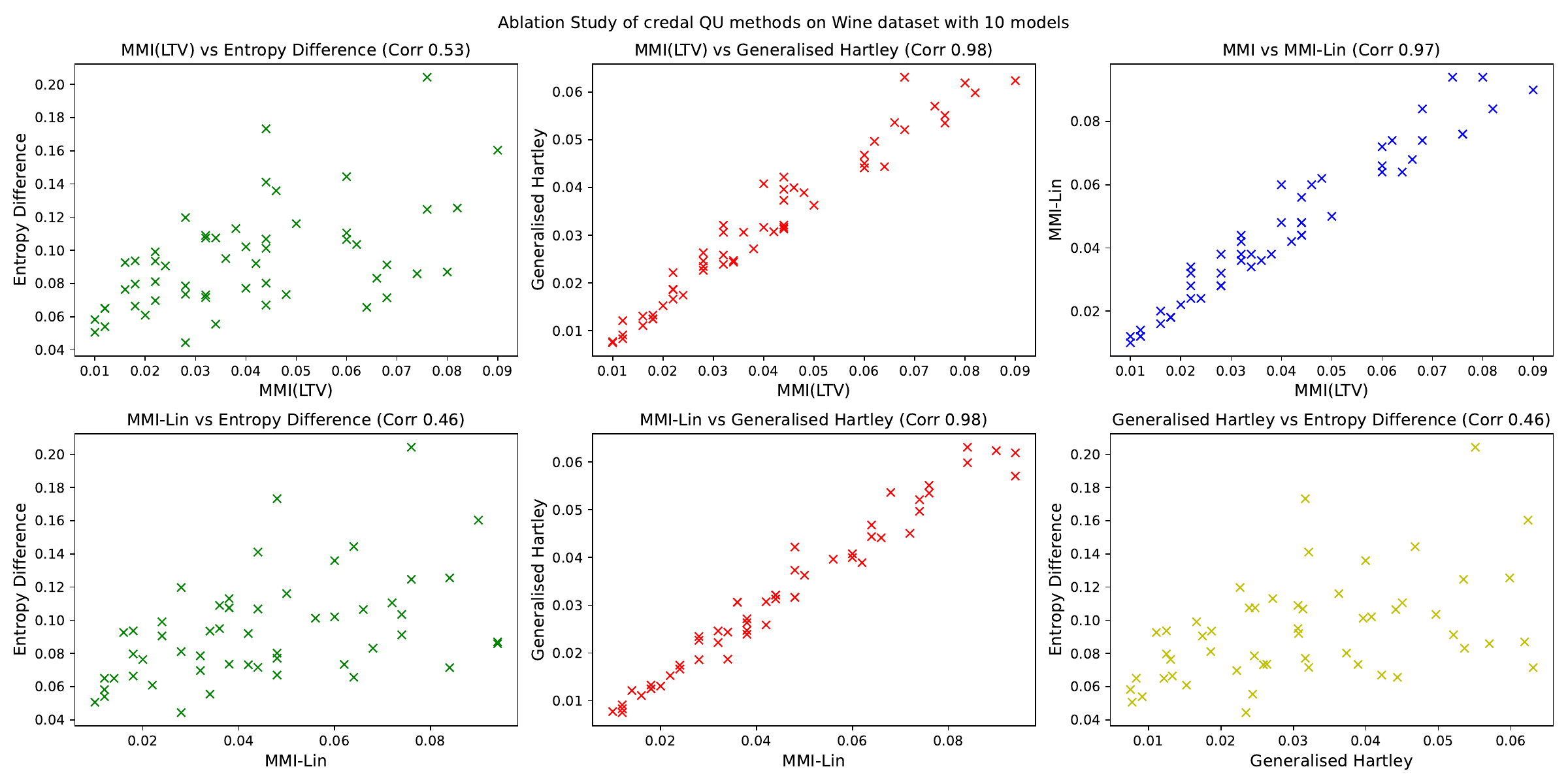}
    \caption{Comparing the UQ measurements on the withheld test set of the UCI wine dataset~\citep{aeberhard1992wine}. We see that MMI and GH are highly correlated, but not exactly perfectly correlated. The upper bound is also highly correlated, suggesting empirically (along with experiments in the main text), that this approximation is quite tight. Finally, we see that MMI and entropy differences do not correlate that well.}
    \label{fig: ablation_study_1}
\end{figure}

\subsection{Overview of Generalised Hartley measures and Entropy Differences}

We refer the reader to the recent survey by \citet{hoarau2025measures} on this topic. We hereby provide background on the two methods we compared against in the main text.

\paragraph{Generalised Hartley Measure.} Generalised Hartley measure, as the name suggests, is a generalisation of the classical Hartley measure, which is defined as follows:

\begin{defn}[Hartley Measure]
    Let $\cX$ be finite. A Hartley measure $U_H$ is a function $2^\cX \mapsto \RR$ such that $U_H(A) = \log_2 |A|$ for $A \in 2^\cX$. 
\end{defn}

The Hartley measure can thus be understood as a measure of uncertainty over sets, which can also be viewed as a function on natural numbers. While the expression is simple, Rényi showed that the Hartley measure is the only function mapping natural numbers to the reals that satisfies:
\begin{enumerate}
    \item $U_H(mn) = U_H(m) + U_H(n)$ \quad \text{(additivity)}
    \item $U_H(m) \geq U_H(m+1)$ \quad \text{(monotonicity)}
    \item $U_H(2) = 1$ \quad \text{(normalisation)}
\end{enumerate}
These are natural conditions for measuring the amount of uncertainty, or equivalently, information, within a set. The generalised Hartley measure extends this idea of measuring uncertainty in sets in the following way,
\begin{defn}[Generalised Hartley~\citep{abellan2005additivity}]
    Let $\cX$ be finite. Given a lower probability $\underline{P}\in\cP(\cX)$, the generalised Hartley $U_{GH}$ maps $\underline{P}$ to $\RR$ as follows:
    \begin{align*}
        U_{GH}(\underline{P}) = \sum_{A\subseteq \cX} m_{\underline{P}}(A) \log_2(|A|),
    \end{align*}
    where the mass function $m_{\underline{P}}: 2^\mathcal{X} \to [0,1] $ is the Möbius inverse of $\underline{P}$, defined as,
    \begin{align*}
        m_{\underline{P}}(A) = \sum_{B\subseteq A} (-1)^{(|A|-|B|)} \underline{P}(B).
    \end{align*}
\end{defn}
It is well known that the Möbius inverse is an equivalent representation of a lower probability, in the sense that, once the mass function is known, we can recover $\underline{P}$ by computing,
\begin{align*}
    \underline{P}(A) = \sum_{B\subseteq A} m(B).
\end{align*}

\paragraph{Entropy Differences.} To measure aleatoric uncertainty for a given distribution $P$, the Shannon entropy is an intuitive solution. The Shannon entropy, or simply entropy, measures the amount of information contained or provided by a source of information. 

\begin{defn}[Shannon Entropy]
    Let $\cX$ be finite. The Shannon entropy of a distribution $P$ is measured by
    \begin{align*}
        U_{Sh}(P) = -\sum_{x\in\cX} P(\{x\})\log_2(P(\{x\}).
    \end{align*}
\end{defn}

Now, when given a set of probabilities, i.e. a credal set, to measure the amount of epistemic uncertainty, or in some literature, they call this non-specificity, one natural approach would be to measure the largest difference between the entropies of any two distributions within the set. Formalised below,
\begin{defn}[Entropy Difference]
    Let $\cX$ be finite and $\cC$ a credal set. The entropy difference is measured by,
    \begin{align*}
        \max_{P\in\cC} U_{Sh}(P) - \min_{Q\in\cC} U_{Sh}(Q).
    \end{align*}
\end{defn}

This measure seems intuitive, but violates the monotonicity axioms that a sensible credal UQ measure should satisfy. Consider a credal set $\cC$ and $P_1, P_2$ the distributions that attained the maximum and minimum of the entropies of the distributions in $\cC$. Now, enlarge $\cC$ to $\cC'$ by adding distributions that have strictly less entropy than $P_1$ but more entropy than $P_2$, then we have $\cC\subset \cC'$, but the entropy differences will stay the same. This violates the monotonicity axioms that say, when a credal set is strictly larger than the other, then the former should be deemed more epistemically uncertain than the latter.

Nonetheless, the entropy difference is still often used in practice as it is simple to compute, and it can be used to decompose 'total uncertainty' into 'aleatoric' and 'epistemic' components. See for example in \citet{abellan2006disaggregated}, the author defined,
\begin{align*}
    \underbrace{\max_{P\in\cC} U_{Sh}(P)}_{\text{Total Uncertainty}} = \underbrace{\min_{Q\in\cC} U_{Sh}(Q)}_{\text{Aleatoric Uncertainty}} + \underbrace{\max_{P\in\cC} U_{Sh}(P) - \min_{Q\in\cC} U_{Sh}(Q)}_{\text{Epistemic Uncertainty}}.
\end{align*}

\subsection{Implementation details}
We provide full experimental details here, which were abbreviated in Section~\ref{sec: experiment} due to space constraints. The experiments were executed on a machine with 8 vCPUs, 30 GB memory, with a NVIDIA V100 GPU.

We evaluate the performance of \acrlong{MMI} using a selective classification task, following the setup in \citet{shaker_aleatoric_2020} and \citet{shaker2021ensemble}. Each dataset is split into training and test sets. For tabular datasets (the Obesity dataset from UCI and Digits dataset from Sci-kit learn), we train 10 random forests with randomly chosen hyperparameters (e.g., tree depth) on the same training set, and evaluate them on the test set. For image datasets (CIFAR10 and CIFAR100), we use 10 pretrained neural networks per task, available at \url{https://github.com/chenyaofo/pytorch-cifar-models}. These models were trained using PyTorch’s default CIFAR training sets, and we evaluate the standard CIFAR test sets by dividing them into 10 buckets to introduce variability.

For both tabular and image data, we use the centroid of the credal set as the predictor, similar to standard ensemble methods. The corresponding lower probability is computed by evaluating the most pessimistic likelihood across the set of predictions for each possible outcome.
\section{IIPM with epsilon-contamination set}

In this section of the appendix, we continue from Section~\ref{sec: IIPM} and dive deeper into the connections between IIPM and the epsilon contamination model---a popular class of imprecise models.

\subsection{Lower Probability Kantorovich Problem}
\label{appendex_subsec: lpkp}
In this subsection, we provide the background on a recently considered problem called the lower probability Kantorovich problem, proposed in \citet{caprio2024optimal}. We start by reviewing what a classical Kantorovich problem is.

\paragraph{Classical Kantorovich problem.} Let $P,Q\in\cP(\cX)$ be two probability measures on $\cX$. Let $c:\cX\times\cX\to\RR_{+}$ be a measurable cost function that gives us the cost of moving on unit of probability mass from the first argument to the other. Then the classical Kantorovich problem is the following optimisation problem,
\begin{align*}
    {\arg\inf}_{\alpha \in \Gamma(P, Q)} \left\{\int c(x, z) d\alpha(x, z)\right\}
\end{align*}
where $\Gamma(P,Q)$ is the set of all joint probability measures whose marginals are $P$ and $Q$. $\alpha$ is also denote as the \emph{transportation plan}, and this is also famously known as the \emph{optimal transport} problem.

\paragraph{Lower Probability Kantorovich problem.} \citet{caprio2024optimal} consider the following research question, 
\begin{center}
    \emph{What does Kanotorovich's problem look like, when instead of transporting probability measures, we transport lower probabilities?}
\end{center}
As his answer, \citet{caprio2024optimal} provided the following characterisation of the problem,

\begin{defn}[Lower Probability Kantorovich's OT problem; LPK] Let $c:\cX\times\cX \to \RR_{+}$ be a Borel measurable cost function. Given lower probabilities $\underline{P}$ and $\underline{Q}$ on $\cX$, we want to find the joint lower probability $\underline{alpha}$ on $\cX\times \cX$ that solves the following optimisation problem
\begin{align*}
    {\arg\inf}_{\underline{\alpha}\in\Gamma(\underline{P}, \underline{Q})} \left\{ \int _{\cX\times\cX} c(x,z) d\underline{\alpha}(x,z)\right\},
\end{align*}
where $\Gamma(\underline{P},\underline{Q})$ is the collection of all joint lower probabilities on $\cX\times\cX$ whose marginals on $\cX$ are $\underline{P}$ and $\underline{Q}$, respectively.
\end{defn}

While this might seem like a straightforward extension from the classical formulation, it is important to note that for imprecise probability theory, there is no unique way to perform conditioning, which means extra care has to be taken into defining $\Gamma(\underline{P},\underline{Q})$.In particular, they focus on a subset of the joint lower probabilities constructed from using geometric conditioning, and called the corresponding LPK problem restricted to such conditioning set the restricted LPK problem.

Later on in Theorem 11 of \citep{caprio2024optimal}, they managed to show that the restricted LPK problem coincides exactly with the classical Kantorovich for $\ epsilon$-contaminated sets. We managed to recover this result in our Theorem~\ref{thm: michele theorem} without needing to consider any specific type of conditioning. In the future, we will investigate how could our result complements to their theory, perhaps allow them to consider other types of conditioning operations in IP.

\subsection{Nonparametric Estimator of IIPM with $\epsilon$-contamination set using kernel distance.}
\label{appendex_subsec: kernel_distance}

Now consider $\epsilon, \delta \in (0, 1)$ two contamination levels, and distributions $P, Q\in\cP(\cX)$ which we have i.i.d samples from. Specifically, let $X_1,\dots, X_n \overset{iid}{\sim} P$ and $Z_1, \dots, Z_m \overset{iid}{\sim}Q$ be random variables taking values in $\cX$. We are interested in quantifying the difference between the $\epsilon$-contaminated model of $P$, i.e. $\underline{P}_\epsilon$, with respect to the $\delta$-contaminated model $Q$, i.e. $\underline{Q}_\delta$. 

\paragraph{A short overview of kernel distances (MMD).} For generic spaces $\mathcal{X}$, with iid samples from $P$ and $Q$, a popular class of non-parametric discrepancy estimator is the maximum mean discrepancy~(MMD)~\citep{gretton2006kernel, gretton2012kernel}. Specifically, pick a kernel function $k:\cX\times\cX \to \RR$ and consider the uniform ball in the corresponding reproducing kernel Hilbert space~(RKHS), i.e. $\mathcal{F}_k = \{f \in \cH_k \text{ s.t. } \|f\|_{k} = 1\}$, where $\|\cdot\|_k$ stands for the RKHS norm. The MMD can be expressed as an IPM with respect to function class $\cF_k$,
\begin{align*}
\operatorname{MMD}_k(P, Q) &= 
    \operatorname{IPM}_{\cF_k}(P, Q) \\
        &= \sup_{f\in\cH_k; \|f\|_k=1} \left\{\left| \int f dP - \int f dQ\right|\right\} \\
        &\overset{(A)}{=} \sup_{f\in\cH_k; \|f\|_k=1} \left\{\left|\langle f, \int k(X,\cdot) dP(X)\rangle - \langle f, \int k(X,\cdot)dQ(X)\rangle\right|\right\} \\
        &= \sup_{f\in\cH_k; \|f\|_k=1} \left\{\left|\left\langle f, \int k(X,\cdot)dP(X)  - \int k(X,\cdot) dQ(X)\right\rangle\right|\right\} \\
        &\overset{(B)}{=}\left\|\int k(X,\cdot)dP(X) - \int k(X,\cdot)dQ(X)\right\|_k \\
        &\overset{(C)}{=}\|\mu_{P} - \mu_{Q}\|_k
\end{align*}
where in step (A) we first use the reproducing property of RKHS functions, ($f(x) = \langle f, k(x,\cdot)\rangle$), and then use the linearity of the inner product to `push' the expectation inside. In step (B) we use the fact that inner product is maximised when the two vectors align, so we pick the unit-norm function to be $$f = \frac{\int f(k(X,\cdot) dP(X) - \int k(X,\cdot) dQ(X)}{\|\int f(k(X,\cdot) dP(X) - \int k(X,\cdot) dQ(X)\|}.$$
Finally, in step (C), we simply write the (Bochner) integral of the feature representation into a more familiar-looking expression, called the kernel mean embedding~\citep{smola2007hilbert,muandet2017kernel}. This simple expression facilitates further simplification, i.e.
\begin{align*}
    \operatorname{MMD}_k(P,Q)^2 
        &= \|\mu_P - \mu_Q\|^2_k \\ 
        &= \langle \mu_P - \mu_Q, \mu_P-\mu_Q\rangle \\
        &\overset{(D)}{=} \EE_{X,X'\sim P}[k(X,X')] - 2\EE_{X\sim P, Z\sim Q}[k(X, Z)] + \EE_{Z,Z'\sim Q}[k(Z,Z')].
\end{align*}
Meaning that we don't need a parametric assumption on how the data is distributed, as long as we have a way to define similarity between instances through a kernel function, we can estimate the difference between the distributions based on their samples by estimating the expectations in step D. Furthermore, for a wide class of kernels, known as charactersitic kernels, the mapping from $P \mapsto \int k(X,\cdot)dP(X)$ is injective, thus $\operatorname{MMD}_k$ is a proper metric between probability distributions. Owing to its simplicity, kernel mean embeddings and MMDs have been used to tackle a broad range of statistical tasks, ranging from hypothesis testing~\citep{gretton2012kernel} to parameter estimation~\citep{briol2019statistical,Cherief-Abdellatif2020_MMDBayes}, causal inference~\citep{muandet2021counterfactual,Sejdinovic2024}, feature attribution~\citep{chau2022rkhs,chau2023explaining}, and learning on distributions~\citep{Muandet12:SMM,szabo2016learning}.

\paragraph{Generalising to contamination sets.} In general, given a lower probability $\underline{P}$, the notion of samples from $\underline{P}$ is ill-defined as lower probability often is used to encode subjective assessment rather than describing the data-generating process. As such, devising a sample-based estimator for the Choquet integral, akin to Monte Carlo estimation for the Lebesgue integral, is not yet possible. Nonetheless, in the case of utilising lower probability constructed through an epsilon contamination model, this is possible. 

Recall $\epsilon,\delta\in(0, 1)$ are two contamination level, with $\underline{P}_\epsilon$ and $\underline{Q}_\delta$ the corresponding contaminated models. We are now interested in quantifying the difference between this two lower probabilities using the IIPM framework through the kernel distances. As before, pick $\cF_k = \{f \in \cH_k \text{ s.t. } \|f\|_k = 1\}$, then we have
\begin{align*}
    \operatorname{IIPM}_{\cF_k}(\underline{P}_\epsilon, \underline{Q}_\delta) 
        &= \sup_{f\in\cH_k;\|f\|_k=1} \left\{\left|\circint f d\underline{P}_\epsilon - \circint f d\underline{Q}_\delta\right|\right\} \\
        &\overset{(i)}{=} \sup_{f\in\cH_k;\|f\|_k=1}  \left\{\left|\int_{\underline{f}}^{\overline{f}} \left(\underline{P}_\epsilon(\{f > t\}) - \underline{Q}_\delta(\{f > t\})\right) dt\right|\right\} \\
        &\overset{(ii)}{=} \sup_{f\in\cH_k;\|f\|_k=1}\left\{\left|(1-\epsilon) \int f dP - (1-\delta)\int f dQ\right|\right\} \\
        &\overset{(iii)}{=} \|(1-\epsilon) \mu_P - (1-\delta)\mu_Q\|_k
\end{align*}

where in step (i) we simply expand our the definition of Choquet integral. In step (ii), we follow Lemma~\ref{lemma: lower_prob_epsilon} and the proof of Theorem~\ref{thm: michele theorem} to express the Choquet integral now as a weighted Lebesgue integral. In step (iii), we follow the derivations of standard MMD provided in the previous paragraph.

Subsequently, the square of $\operatorname{IIPM}_{{\cF}_k}(\underline{P}_\epsilon, \underline{Q}_\delta)$ can be expressed as,
\begin{align*}
    \operatorname{IIPM}_{{\cF}_k}(\underline{P}_\epsilon, \underline{Q}_\delta) &= (1-\epsilon)^2 \mathbb{E}_{X,X'\sim P}[k(X,X')] \\ &\quad\quad - 2(1-\epsilon)(1-\delta)\EE_{X\sim P, Z\sim Q}[k(X,Z)] + (1-\delta)^2\EE_{Z,Z'\sim Q}[k(Z,Z')].
\end{align*}

This allows us to then construct a non-parametric (unbiased) estimator of (the square of )$\operatorname{IIPM}_{\cF_k}(\underline{P}_\epsilon, \underline{Q}_\delta)$ as follows,
\begin{align*}
\widehat{\operatorname{IIPM}^2_{{\cF}_k}}(\underline{P}_\epsilon, \underline{Q}_\delta) 
    &= (1-\epsilon)^2\frac{1}{n(n-1)}\sum_{i=1}^n\sum_{j=1, j\neq i}^n k(X_i, X_j) \\ 
    &\quad\quad -2(1-\epsilon)(1-\delta)\frac{1}{nm}\sum_{i=1}^n\sum_{j=1}^m k(X_i, Z_j) + (1-\delta)^2\sum_{i=1}^m\sum_{j=1, j\neq i}^m k(Z_i,Z_j)
\end{align*}

Due to project scope, we did not further investigate the concrete applications of such a non-parametric worst-case probability discrepancy estimator, but in future work, we will explore its use case in robust two-sample testing, akin to \citet{schrab2024robust}, or in generative model training, akin to \citet{li2017mmd}.


\end{document}